\documentclass[11pt]{article}

\usepackage[T1]{fontenc}
\usepackage[utf8]{inputenc}
\usepackage{lmodern}
\usepackage{microtype}

\usepackage[margin=1in]{geometry}

\usepackage{amsmath,amssymb,amsfonts,amsthm,mathtools}

\usepackage{booktabs}
\usepackage{tabularx}

\usepackage{graphicx}
\usepackage{xcolor}

\usepackage{nicefrac}

\usepackage[numbers,sort&compress]{natbib}

\usepackage[hidelinks]{hyperref}

\theoremstyle{plain}
\newtheorem{theorem}{Theorem}
\newtheorem{proposition}{Proposition}
\newtheorem{lemma}{Lemma}
\newtheorem{corollary}{Corollary}

\theoremstyle{definition}
\newtheorem{definition}{Definition}
\newtheorem{assumption}{Assumption}

\theoremstyle{remark}
\newtheorem{remark}{Remark}

\title{In-Context Learning Operates as Concept Subspace Learning}

\author{
 Wei Tang$^{1}$\thanks{Equal contribution.}, 
 \space Xinyan Jiang$^{2*}$, 
 \space Fakhri Karray$^{1}$, 
 \space Lijie Hu$^{1}$ 
 \\
  $^1$Mohamed bin Zayed University of Artificial Intelligence (MBZUAI)\\
  $^2$Shanghai Advanced Research Institute, Chinese Academy of Science\\
  \texttt{\{wei.tang, fakhri.karray, lijie.hu\}@mbzuai.ac.ae}, \texttt{jiangxy2024@sari.ac.cn}
}

\date{}

\begin{document}

\maketitle

\begin{abstract}
Regression and Bayesian accounts of in-context learning (ICL) explain how demonstrations can induce predictors, while mechanistic analyses often identify compact activation directions that steer prompted behavior. However, it remains unclear whether structured demonstrations induce low-dimensional concept inference. We study this question through a concept-subspace view of ICL, in which tasks vary only along intrinsic concept coordinates, although inputs are observed in a high-dimensional ambient space. For ridge and least-squares ICL proxies, prediction decomposes exactly into concept-coordinate regression and off-subspace leakage. Under block-diagonal or near-block-diagonal covariance assumptions, the leading estimation and nuisance-sensitivity terms scale with the dimension of the concept subspace, while residual effects are controlled by cross-subspace coupling. This separation gives a mechanistic prediction: recoverable task information should concentrate in a low-dimensional, task-aligned activation subspace. On CounterFact-derived multi-relation prompts with Llama-3-8B, a $68$--$73$-dimensional subspace of the $4096$-dimensional residual stream restores $78.8\%$ of the clean--corrupted accuracy gap, whereas patching the complementary subspace restores $0\%$. Concept swaps redirect predictions toward injected relations, while random and cross-task matched-rank controls are largely ineffective. Additional experiments on Qwen2.5-7B and a controlled cross-lingual rule task show the same qualitative pattern. These results support concept subspaces as compact, task-aligned mediators of recoverable ICL behavior in structured task families, without implying full-circuit recovery.
\end{abstract}

\section{Introduction}
\label{sec:introduction}
In-context learning (ICL) enables a pretrained language model to infer a new task from a few demonstrations without updating its weights \cite{brown2020language, wu2024glance, wei2023larger}. Demonstrations do more than provide supervised examples: they can specify label spaces, input distributions, output formats, and semantic priors \cite{min2022rethinking,chan2022data}. 
Theoretical work has shown that, in controlled settings, ICL can implement least-squares or ridge regression, Bayesian inference, gradient descent, and algorithm selection~\cite{xie2022explanation,bai2023transformers}. These accounts explain how a predictor can be fit from a prompt, but they often treat the inferred task as an unconstrained vector in a high-dimensional ambient space. 

This issue is most salient when prompts share latent structure. Many demonstrations may instantiate a shared relation, rule, transformation, or semantic attribute~\cite{pan2023what}. In such cases, treating the task as an unconstrained ambient parameter may obscure the relevant geometry. This raises a fundamental structural question: \emph{When tasks share a low-dimensional conceptual structure, does ICL operate by explicitly inferring that low-dimensional concept?} 
This question is related to recent evidence that transformer representations can contain disentangled latent concepts or language-agnostic concept directions \cite{hong2026latentconcept,dumas2025separating}.
The answer has direct statistical and mechanistic consequences.
If ICL exploits shared task geometry, estimation should scale with the concept dimension, while orthogonal perturbations act mainly as nuisances. Diffuse high-dimensional pattern matching gives no reason to expect such dimension-dependent behavior or nuisance stability. Existing regression theories, activation-space analyses, and causal-intervention tools each support parts of this view~\cite{cunningham2024sparse}, but do not identify a common latent object induced by structured ICL prompts.

To make this question precise, we introduce \emph{concept-subspace ICL}. In our model, task parameters satisfy $w = U\beta$, where $U \in \mathbb{R}^{d \times r}$ has orthonormal columns and $r \ll d$. Inputs live in the ambient space, but the task depends on them only through concept coordinates $z = U^\top x$. Directions orthogonal to $U$ are irrelevant in the population model, except through finite-sample effects and covariance leakage. For ridge and least-squares ICL proxies, ambient prediction decomposes into regression over the $r$-dimensional concept coordinates plus an off-subspace leakage term. Under block-diagonal or near-block-diagonal covariance assumptions, the leading estimation and nuisance-sensitivity terms depend on $r$, while residual error is controlled by cross-subspace coupling. The same formulation also admits a Bayesian interpretation as posterior inference over $\beta$ and identifies the concept subspace up to rotation~\cite{stuehmer2020isa,yang2025fastmipl}.

We turn the statistical hypothesis into a mechanistic test. If a transformer implements concept-subspace inference, then task-relevant information should be concentrated in a low-dimensional, task-aligned activation subspace. Patching this subspace from a clean prompt into a corrupted prompt should restore much of the ICL behavior, whereas patching the orthogonal complement or matched-rank control subspaces should have little effect. This gives a falsifiable intervention test: the relevant subspace must carry causal task information, not merely provide a low-rank direction along which the model can be steered.
We empirically evaluate this test on CounterFact-derived multi-relation ICL tasks \cite{meng2022locating} using Llama-3-8B \cite{grattafiori2024llama}. A $68$--$73$-dimensional subspace of the $4096$-dimensional residual stream recovers $78.8\%$ of the clean--corrupted accuracy gap, while patching the complementary subspace restores $0\%$. Concept-swap interventions redirect predictions toward the injected relation, whereas random and cross-task matched-rank controls are largely ineffective. Additional experiments show consistent patterns on cross-lingual tasks~\cite{lample2018word} and on Qwen2.5-7B \cite{qwen_qwen25_2024}. These results do not imply that the full ICL circuit is low-dimensional. Rather, they indicate that, for the structured task families, a compact concept subspace carries a substantial portion of the recoverable causal signal.

Our contributions can be summarized as follows. First, we formulate ICL with shared task variation as estimation in an intrinsic concept subspace rather than unconstrained ambient regression. Second, we analyze ridge and least-squares proxies and show that, under explicit covariance and regularization assumptions, dominant error and nuisance-sensitivity terms scale with the concept dimension $r$, while leakage is controlled by cross-subspace coupling. Third, we turn the theory into a subspace-patching hypothesis and test it with concept-only patching, complementary-space controls, matched-rank controls, and concept swaps on controlled multi-relation tasks.

\section{Related Work}
\label{sec:related}
\paragraph{In-context learning.}
Empirical studies of ICL show that demonstrations provide more than a small supervised training set. A complementary theoretical line asks which algorithm a transformer executes in-context. linear regression studies show that transformers can implement or approximate least-squares-style predictors in controlled settings \cite{garg2022what,akyurek2023what}. Bayesian and algorithm-selection views explain ICL as posterior inference or as selecting among latent algorithms from the prompt \cite{xie2022explanation,bai2023transformers}. Gradient-descent and meta-optimizer interpretations analyze how transformer computations can emulate iterative optimization procedures \cite{vonoswald2023transformers,dai2023why,ahn2023transformers}. Other analyses study how pretraining task diversity, meta-learning objectives, and learnability conditions shape the emergence of ICL \cite{raventos2023pretraining,kirsch2022general,wies2023learnability}. Mechanistic work further identifies induction-head circuits and task-vector phenomena as compact carriers of prompt-induced behavior \cite{olsson2022incontext,yang2025taskvectors}. These works support a task-inference view of ICL. Our focus is different: we ask when the inferred task should be treated as a low-dimensional concept coordinate rather than as an arbitrary ambient parameter. 

\paragraph{Concept learning.}
Concept-based methods seek representations in which semantically meaningful variables can be inspected, constrained, or intervened on~\cite{koh2020concept, Hu2025ECBM, Hu2025SSCBM}. Post-hoc concept directions and whitening methods identify or align internal features with human-interpretable concepts \cite{kim2018tcav,chen2020concept}. Ante-hoc architectures make concept variables explicit through bottlenecks or concept embeddings \cite{koh2020concept,zarlenga2022concept}. However, unsupervised latent factors are generally unidentifiable without structural assumptions, weak supervision, or an appropriate invariance class \cite{locatello2019challenging,khemakhem2020variational}. Recent analyses also caution that concept bottlenecks may leak non-concept information rather than enforcing a clean semantic factorization \cite{almudevar2026bottleneck}. In contrast, our concept variable is not a named human label and is not imposed during model training. It is a latent task coordinate inferred online from demonstrations.

\paragraph{Activation patching.}
Activation patching and interchange interventions test whether an internal variable causally mediates behavior by replacing activations between counterfactual runs. This logic underlies causal mediation and causal abstraction analyses \cite{vig2020investigating,geiger2021causal}. Circuit-level work studies how transformer components implement specific behaviors and how such circuits can be discovered automatically \cite{wang2023interpretability,conmy2023towards}. Patching conclusions depend on corruption schemes, metrics, intervention granularity, and controls; in particular, subspace patching can steer behavior without faithfully identifying the subspace used by the original computation \cite{zhang2024towards,makelov2024subspace}. Our intervention target is not an entire residual stream or an arbitrary direction found by search, but the low-dimensional activation subspace predicted by the concept-subspace theory and tested with matched controls.

Overall, prior work largely treats ICL, concept-level representation, and causal intervention as separate phenomena. We connect them through a theory-first latent concept subspace that links statistical task structure, activation geometry, and causal behavior.

\section{Concept-Subspace In-Context Learning}
\label{sec:concept_subspace_icl}
\subsection{Problem Formalization}
We study a linear regression in-context learning (ICL) model with a shared low-dimensional task structure, following the stylized linear regression settings widely used to analyze ICL in transformers and linear attention models \cite{guo2024how,he2025incontext}. Let $d\ge 1$ and let $\Lambda\in\mathbb{R}^{d\times d}$ be positive definite. For a task parameter $w\in\mathbb{R}^d$, demonstrations are sampled as
\begin{equation}
    x_i\stackrel{i.i.d.}{\sim}\mathcal{N}(0,\Lambda),\qquad y_i=\langle w,x_i\rangle,\qquad i=1,\dots,M.
\label{eq:demo_gen}
\end{equation}
A query $x\sim\mathcal{N}(0,\Lambda)$ is sampled independently, and the goal is to predict $y=\langle w,x\rangle$.

\begin{assumption}[Concept-subspace task family]
\label{ass:concept_subspace}
Let $U\in\mathbb{R}^{d\times r}$ have orthonormal columns, where $1\le r\le d$. The task family varies only in the concept subspace $\operatorname{span}(U)$: $w=U\beta$, with $\beta\sim\mathcal{N}(0,I_r)$. Equivalently, $y_i=\langle \beta,U^\top x_i\rangle$. We call $z:=U^\top x\in\mathbb{R}^r$ the concept coordinates of $x$.
\end{assumption}

Let $U_\perp\in\mathbb{R}^{d\times(d-r)}$ be any orthonormal complement of $U$, and write the covariance in the basis $[U\ U_\perp]$:
\begin{equation}
    \Lambda=\begin{bmatrix}\Lambda_{11}&\Lambda_{12}\\ \Lambda_{21}&\Lambda_{22}\end{bmatrix},\qquad \Lambda_{11}=U^\top\Lambda U,\quad \Lambda_{22}=U_\perp^\top\Lambda U_\perp,\quad \Lambda_{12}=U^\top\Lambda U_\perp.
\end{equation}
We use two covariance regimes: \textbf{(BD)} exact block diagonality, $\Lambda_{12}=0$; and \textbf{(NBD)} near block diagonality, $\|\Lambda_{12}\|_{\mathrm{op}}\le\rho$. Under (BD), concept and nuisance coordinates are independent because the joint distribution is Gaussian.

\subsection{Theoretical Analysis}
For a demonstration set $\mathcal{D}:=\{(x_i,y_i)\}_{i=1}^M$, we focus on the ridge regression proxy for linear ICL:
\begin{equation}
    \hat w_{\mathcal{D}}=(S+\lambda I_d)^{-1}\left(\frac1M\sum_{i=1}^M x_i y_i\right),\qquad f_{\mathcal{D}}(x)=\langle \hat w_{\mathcal{D}},x\rangle,\qquad S=\frac1M\sum_{i=1}^M x_ix_i^\top.
\label{eq:ridge_icl}
\end{equation}
Ridge and least-squares estimators are standard statistical proxies for test-time regression \cite{hoerl1970ridge}; in the ICL literature, least-squares, ridge, gradient-descent, and preconditioned-gradient-descent predictors arise as explicit or implicit algorithms implemented by trained transformers. All ambient-space statements below assume $\lambda>0$, so that $S+\lambda I_d$ is invertible. The unregularized least-squares case can be recovered in concept space under the usual full-rank condition on $Z^\top Z$, or in ambient space by replacing inverses with Moore--Penrose pseudoinverses under additional rank assumptions.

\begin{lemma}[Exact subspace decomposition]
\label{lem:subspace_reduction}
Assume $w=U\beta$ and $\lambda>0$. Let $Z\in\mathbb{R}^{M\times r}$ have rows $z_i^\top=(U^\top x_i)^\top$, let $y=(y_1,\dots,y_M)^\top$, and write $x=Uz+U_\perp t$, where $z=U^\top x$ and $t=U_\perp^\top x$. Define the empirical blocks $S_{11}=U^\top S U$, $S_{12}=U^\top S U_\perp$, $S_{21}=S_{12}^\top$, $S_{22}=U_\perp^\top S U_\perp$, and set $A=S_{11}+\lambda I_r$, $B=S_{12}$, $D=S_{22}+\lambda I_{d-r}$, $H=D-S_{21}A^{-1}B$. Then $H\succ0$, and
\begin{equation}
f_{\mathcal{D}}(x)=z^\top\hat\beta_{\mathcal{D}}+\Delta_{\mathcal{D}}(x),\qquad \hat\beta_{\mathcal{D}}=A^{-1}S_{11}\beta=(Z^\top Z+M\lambda I_r)^{-1}Z^\top y,
\label{eq:subspace_decomp}
\end{equation}
with
\begin{equation}
\Delta_{\mathcal{D}}(x)=\big(t^\top-z^\top A^{-1}B\big)\gamma_{\mathcal{D}},\qquad \gamma_{\mathcal{D}}=H^{-1}S_{21}(\beta-\hat\beta_{\mathcal{D}}).
\label{eq:delta_correct}
\end{equation}
Thus the off-subspace contribution is controlled by the bottom coordinate $\gamma_{\mathcal{D}}=U_\perp^\top\hat w_{\mathcal{D}}$, which is induced through empirical cross-subspace coupling. In particular, under (BD), the population cross-covariance is zero, while the finite-sample term $S_{21}$ remains.
\end{lemma}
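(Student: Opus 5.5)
The plan is to rewrite the ridge normal equations in the rotated orthonormal basis $[U\ U_\perp]$ and eliminate the nuisance block by a Schur complement. Let $Q=[U\ U_\perp]$, which is orthogonal. Since $y_i=\langle U\beta,x_i\rangle$, the right-hand side of \eqref{eq:ridge_icl} is $\frac1M\sum_i x_ix_i^\top U\beta=SU\beta$. Hence $\hat w_{\mathcal D}$ is the unique solution of $(S+\lambda I_d)\hat w=SU\beta$.

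Write $\hat w=Ua+U_\perp\gamma$ with $a=U^\top\hat w$ and $\gamma=U_\perp^\top\hat w$. Multiply the normal equations on the left by $Q^\top$ and use $Q^\top(S+\lambda I_d)Q=\begin{bmatrix}A&B\\ S_{21}&D\end{bmatrix}$ and $Q^\top SU\beta=(S_{11}\beta,\;S_{21}\beta)$. This gives the block system
\begin{equation*}
Aa+B\gamma=S_{11}\beta,\qquad S_{21}a+D\gamma=S_{21}\beta .
\end{equation*}

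Positivity comes first, because the elimination needs it. Since $S\succeq0$ and $\lambda>0$, we have $S+\lambda I_d\succ0$, so its congruent transform under $Q$ is positive definite. Therefore the diagonal block $A\succeq\lambda I_r$ is invertible, and the Schur complement $H=D-S_{21}A^{-1}B$ is positive definite (standard Schur-complement criterion). In particular $H$ is invertible.

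Next I solve the block system. The first equation gives $a=A^{-1}S_{11}\beta-A^{-1}B\gamma$. Substituting into the second and collecting the $\gamma$ terms yields
\begin{equation*}
H\gamma=S_{21}\bigl(\beta-A^{-1}S_{11}\beta\bigr).
\end{equation*}
Setting $\hat\beta_{\mathcal D}:=A^{-1}S_{11}\beta$ gives $\gamma=\gamma_{\mathcal D}$ as claimed, with $\gamma_{\mathcal D}=U_\perp^\top\hat w_{\mathcal D}$ by construction.

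For the prediction, write $x=Uz+U_\perp t$, so $f_{\mathcal D}(x)=z^\top a+t^\top\gamma$. Substituting the formula for $a$ gives
\begin{equation*}
f_{\mathcal D}(x)=z^\top\hat\beta_{\mathcal D}+\bigl(t^\top-z^\top A^{-1}B\bigr)\gamma_{\mathcal D},
\end{equation*}
which is exactly \eqref{eq:subspace_decomp}--\eqref{eq:delta_correct}.

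It remains to identify $\hat\beta_{\mathcal D}$ with concept-space ridge. We have $S_{11}=\frac1M Z^\top Z$ and $y=Z\beta$, so $Z^\top y=Z^\top Z\beta=MS_{11}\beta$. Multiplying numerator and denominator by $M$ then gives
\begin{equation*}
A^{-1}S_{11}\beta=\bigl(Z^\top Z+M\lambda I_r\bigr)^{-1}Z^\top y .
\end{equation*}
The closing remark about (BD) is immediate: $\Lambda_{12}=0$ is a population statement, while $S_{21}=\frac1M T^\top Z$ (with $T$ the nuisance coordinates of the demonstrations) is a random finite-sample quantity that generally does not vanish.

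Nothing here is deep. The step that needs the most care is the bookkeeping in the elimination: keeping $S_{21}=B^\top$ versus $B$ in the right places, and keeping the sign of $\beta-\hat\beta_{\mathcal D}$ straight. I would double-check it by verifying that the recovered $(a,\gamma)$ satisfies both block equations.
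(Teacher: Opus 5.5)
Your proposal is correct and follows essentially the same route as the paper's proof: you write the ridge normal equations in the basis $[U\ U_\perp]$, get $H\succ0$ as the Schur complement of $S+\lambda I_d\succ0$, solve the first block equation for the concept coordinate and substitute into the second to obtain $\gamma_{\mathcal D}$, then read off $f_{\mathcal D}(x)=z^\top\alpha+t^\top\gamma$. The identification of $\hat\beta_{\mathcal D}$ with concept-space ridge via $S_{11}=M^{-1}Z^\top Z$ and $y=Z\beta$ is also the same as in the paper.
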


For compactness, write $L_\delta:=r+\log(a_0/\delta)$, $R_\perp(\delta):=\sqrt{\operatorname{tr}(\Lambda_{22})}+\sqrt{\|\Lambda_{22}\|_{\mathrm{op}}\log(a_0/\delta)}$, and $\eta_M(\delta):=R_\perp(\delta)\sqrt{L_\delta/M}$, where $a_0>0$ is a universal numerical constant.

\begin{theorem}[Concept-coordinate estimation and off-subspace leakage]
\label{thm:concept_dimension}
Under the data model in Equation~\eqref{eq:demo_gen} and the concept-subspace model in Assumption~\ref{ass:concept_subspace}, take $\lambda=\lambda_0/M$ for a fixed $\lambda_0>0$. Under (BD), if $M\ge K_0 L_\delta$, for a constant $K_0$ depending only on $\Lambda_{11}$ and $\lambda_0$, then with probability at least $1-\delta$ over $\beta$, the demonstrations, and the query, the following hold. Let $z=U^\top x$, $t=U_\perp^\top x$, and let $\Delta_{\mathcal{D}}(x)$ be the off-subspace term in Lemma~\ref{lem:subspace_reduction}. Then
\begin{equation}
\|\hat\beta_{\mathcal{D}}-\beta\|\le K_\beta\frac{\sqrt{L_\delta}}{M},\qquad |\Delta_{\mathcal{D}}(x)|\le K_\Delta\left(\|t\|+\frac{R_\perp(\delta)}{\sqrt M}\|z\|\right)\eta_M(\delta).
\label{eq:concept_and_leakage_rates}
\end{equation}
Consequently, the prediction error decomposes as
\begin{equation}
|f_{\mathcal{D}}(x)-\langle w,x\rangle|\le \underbrace{\|z\|\|\hat\beta_{\mathcal{D}}-\beta\|}_{\text{concept estimation error}}+\underbrace{|\Delta_{\mathcal{D}}(x)|}_{\text{off-space leakage}},
\label{eq:err_split_labeled}
\end{equation}
and in particular
\begin{equation}
|f_{\mathcal{D}}(x)-\langle w,x\rangle|\le \underbrace{K_\beta\|z\|\frac{\sqrt{L_\delta}}{M}}_{\text{concept estimation error}}+\underbrace{K_\Delta\left(\|t\|+\frac{R_\perp(\delta)}{\sqrt M}\|z\|\right)\eta_M(\delta)}_{\text{off-space leakage}}.
\label{eq:final_rate_compact}
\end{equation}
Moreover, for every $v\in\operatorname{span}(U)^\perp$,
\begin{equation}
|f_{\mathcal{D}}(x+v)-f_{\mathcal{D}}(x)|\le K_\Delta\eta_M(\delta)\|v\|.
\label{eq:invariance_compact}
\end{equation}
Under (NBD), the same bounds hold after replacing $\eta_M(\delta)$ by $\eta_M(\delta)+K_\rho\rho\sqrt{L_\delta}$ and $R_\perp(\delta)/\sqrt M$ by $R_\perp(\delta)/\sqrt M+K_\rho'\rho$, where $K_\rho,K_\rho'$ depend only on $\Lambda_{11}$ and $\lambda_0$.
\end{theorem}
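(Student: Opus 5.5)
Everything is organized around the exact decomposition of Lemma~\ref{lem:subspace_reduction}. There $f_{\mathcal{D}}(x)-\langle w,x\rangle=z^\top(\hat\beta_{\mathcal{D}}-\beta)+\Delta_{\mathcal{D}}(x)$, since $\langle w,x\rangle=z^\top\beta$. The triangle inequality then gives \eqref{eq:err_split_labeled} at once. Substituting the two rates in \eqref{eq:concept_and_leakage_rates} gives \eqref{eq:final_rate_compact}. Likewise \eqref{eq:invariance_compact} follows from $f_{\mathcal{D}}(x+v)-f_{\mathcal{D}}(x)=\langle \hat w_{\mathcal{D}},v\rangle=(U_\perp^\top v)^\top\gamma_{\mathcal{D}}$, which is bounded by $\|v\|\|\gamma_{\mathcal{D}}\|$. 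So the real content is three high-probability estimates:
\begin{itemize}
\item[(i)] $\|\hat\beta_{\mathcal{D}}-\beta\|\lesssim\sqrt{L_\delta}/M$;
\item[(ii)] $\|\gamma_{\mathcal{D}}\|\lesssim\eta_M(\delta)$;
\item[(iii)] $\|A^{-1}B\|_{\mathrm{op}}\lesssim R_\perp(\delta)/\sqrt M$.
\end{itemize}
These give $|\Delta_{\mathcal{D}}(x)|\le(\|t\|+\|A^{-1}B\|\|z\|)\|\gamma_{\mathcal{D}}\|$, which is the stated form. A union bound over a constant number of events, with $\delta$ rescaled into $a_0$, gives the overall probability $1-\delta$.

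\emph{Step (i).} Since $\hat\beta_{\mathcal{D}}=A^{-1}S_{11}\beta$, we have $\hat\beta_{\mathcal{D}}-\beta=-\lambda A^{-1}\beta=-(\lambda_0/M)A^{-1}\beta$.
\begin{itemize}
\item Standard Gaussian covariance concentration in dimension $r$ gives $\|S_{11}-\Lambda_{11}\|_{\mathrm{op}}\lesssim\|\Lambda_{11}\|\sqrt{L_\delta/M}$. Once $M\ge K_0L_\delta$, this forces $S_{11}\succeq\tfrac12\Lambda_{11}$, so $\|A^{-1}\|\le 2/\lambda_{\min}(\Lambda_{11})$.
\item A $\chi^2_r$ tail bound gives $\|\beta\|\lesssim\sqrt{L_\delta}$.
\end{itemize}
Together these give (i) with $K_\beta$ depending on $\lambda_0,\Lambda_{11}$. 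This step is straightforward.

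\emph{Steps (ii) and (iii).} I will use the explicit structure $\gamma_{\mathcal{D}}=H^{-1}S_{21}(\beta-\hat\beta_{\mathcal{D}})=\lambda H^{-1}S_{21}A^{-1}\beta$.
\begin{itemize}
\item \emph{Bounding $H^{-1}$.} Since $H\succeq\lambda I$ only gives the useless $\|H^{-1}\|\le M/\lambda_0$, I will instead use the Schur-complement identity $\lambda H^{-1}S_{21}A^{-1}=-U_\perp^\top(S+\lambda I)^{-1}U\cdot\lambda$. Then $\gamma_{\mathcal{D}}=U_\perp^\top\hat w_{\mathcal{D}}$, and $\hat w_{\mathcal{D}}=(S+\lambda I)^{-1}S U\beta$.
\item \emph{Rewriting $\gamma_{\mathcal{D}}$.} With $X$ the design, $S U\beta=X^\top Z\beta/M$. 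Hence $\hat w_{\mathcal{D}}$ is the minimum-norm-like ridge interpolant of $y=Z\beta$.
\item \emph{Key identity.} Write $T$ for the nuisance block of the design. The cross term is $S_{21}=T^\top Z/M$. Under (BD), $T$ is independent of $Z$, so conditional on $Z$, $T^\top Z u$ is Gaussian with covariance $\|Zu\|^2\Lambda_{22}$. This yields $\|T^\top Z u\|\le\|Zu\|R_\perp(\delta)$ (Hanson--Wright / Gaussian norm concentration), uniformly over $u$ in an $r$-dimensional net. That costs the $r$ in $L_\delta$ and gives $\|S_{21}\|_{\mathrm{op}}\lesssim R_\perp(\delta)\sqrt{L_\delta}/\sqrt M$ roughly.
\item \emph{Result (iii).} Combining with $\|A^{-1}\|$ gives (iii).
\end{itemize}

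\emph{Main obstacle.} The hard part is controlling $H^{-1}$ in (ii) when $d-r\gg M$, since $\lambda=\lambda_0/M$ is tiny. My plan is to avoid inverting $H$ on the nuisance block and instead use that $\gamma_{\mathcal{D}}$ lies in the row span of $T$.
\begin{itemize}
\item Use the push-through identity $(S+\lambda I)^{-1}X^\top=X^\top(XX^\top/M+\lambda I)^{-1}$. This gives $\gamma_{\mathcal{D}}=\tfrac1M T^\top(XX^\top/M+\lambda I)^{-1}Z\beta$.
\item Bound the vector $(XX^\top/M+\lambda I)^{-1}Z\beta$ by exploiting $XX^\top\succeq ZZ^\top$ along $\operatorname{col}(Z)$.
\item Then apply the conditional Gaussian bound on $T^\top(\cdot)$ as above. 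This should produce $\|\gamma_{\mathcal{D}}\|\lesssim R_\perp(\delta)\sqrt{L_\delta/M}$.
\end{itemize}
I expect this to be the delicate step. The reason is that the vector $(XX^\top/M+\lambda I)^{-1}Z\beta$ depends on $T$, so the clean conditional-Gaussian argument fails. If so, I would fall back on a leave-structure argument: condition on $Z$ and use a net over $T$-dependent directions, or absorb the dependence via the operator bound $\|T\|_{\mathrm{op}}$.

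\emph{(NBD).} Write $t=\Lambda_{21}\Lambda_{11}^{-1}z+\tilde t$ with $\tilde t$ independent of $z$ and $\|\Lambda_{21}\Lambda_{11}^{-1}\|\le\rho/\lambda_{\min}(\Lambda_{11})$. The BD argument applies to $\tilde t$. The deterministic shift contributes additive terms $K_\rho'\rho$ to $\|A^{-1}B\|$ and $K_\rho\rho\sqrt{L_\delta}$ to $\|\gamma_{\mathcal{D}}\|$, the latter through $\|\beta\|\lesssim\sqrt{L_\delta}$. These give the stated replacements.
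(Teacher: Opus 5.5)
Your reduction to three estimates is correct and matches the paper. So are step (i), the invariance identity $f_{\mathcal D}(x+v)-f_{\mathcal D}(x)=\langle\gamma_{\mathcal D},U_\perp^\top v\rangle$, and the (NBD) conditioning.

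The genuine gap is step (ii), which is the core of the leakage and invariance bounds. You correctly write $\gamma_{\mathcal D}=\lambda H^{-1}S_{21}A^{-1}\beta$, but then dismiss $\|H^{-1}\|_{\mathrm{op}}\le 1/\lambda$ as useless. Combined with the prefactor $\lambda$, it is exactly what is needed: it gives the deterministic bound $\|\gamma_{\mathcal D}\|\le\|S_{21}A^{-1}\beta\|$. Here $H^{-1}$ is the bottom-right block of $(S+\lambda I_d)^{-1}\preceq\lambda^{-1}I_d$.

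This step also removes the $T$-dependence you are worried about. $H$ depends on the nuisance design, but it is eliminated by a worst-case bound. The remaining vector $a=A^{-1}\beta$ depends only on $(Z,\beta)$, which under (BD) is independent of $T$. Conditionally, $S_{21}a=\frac1M\sum_i t_i z_i^\top a\sim\mathcal N\bigl(0,\frac{a^\top S_{11}a}{M}\Lambda_{22}\bigr)$, with $a^\top S_{11}a\le\|A^{-1}\|_{\mathrm{op}}\|\beta\|^2$ because $S_{11}\preceq A$. A single Gaussian norm bound then gives $\|\gamma_{\mathcal D}\|\lesssim R_\perp(\delta)\|\beta\|/\sqrt M\lesssim\eta_M(\delta)$, with no net.

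The push-through route you substitute stalls exactly where you anticipate: $(XX^\top/M+\lambda I)^{-1}Z\beta$ depends on $T$. Your fallbacks do not repair it. Both amount to bounding $T^\top$ uniformly over directions in $\mathbb{R}^M$, i.e. by $\|T\|_{\mathrm{op}}\gtrsim\sqrt{M\|\Lambda_{22}\|_{\mathrm{op}}}$. That discards the fact that a vector independent of $T$ is not aligned with its top singular directions, which is precisely the source of the $1/\sqrt M$ rate. As written, you have no proof of $\|\gamma_{\mathcal D}\|\lesssim\eta_M(\delta)$, and therefore none of the leakage or invariance bounds.

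Step (iii) also does not hold as you state it. An operator-norm bound $\|A^{-1}B\|_{\mathrm{op}}\lesssim R_\perp(\delta)/\sqrt M$ is false in general. Suppose $\operatorname{tr}(\Lambda_{22})$ is comparable to $\|\Lambda_{22}\|_{\mathrm{op}}$ (one dominant nuisance eigenvalue). Then $\|S_{21}\|_{\mathrm{op}}$ is of order $\sqrt{r\|\Lambda_{22}\|_{\mathrm{op}}/M}$ up to $\Lambda_{11}$-dependent factors, exceeding $R_\perp(\delta)/\sqrt M$ by a factor growing like $\sqrt r$.

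Your net argument reflects this: it produces the extra $\sqrt{L_\delta}$ you wrote. So ``combining with $\|A^{-1}\|$'' does not give (iii), and at best yields a weaker leakage bound than the one stated.

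You do not need uniformity here. Since $z^\top A^{-1}B=(S_{21}A^{-1}z)^\top$ and the query is independent of the demonstrations, the same conditional-Gaussian argument with $a=A^{-1}z$ gives $\|S_{21}A^{-1}z\|\lesssim R_\perp(\delta)\|z\|/\sqrt M$ directly. This is how the paper proceeds. The only uniform-in-$v$ statement the theorem needs is the invariance bound, and that goes through the single vector $\gamma_{\mathcal D}$.
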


\begin{remark}[Interpretation of the dimension dependence]
The leading concept-estimation term in Equation~\eqref{eq:concept_and_leakage_rates} depends on $r$, not on the ambient dimension $d$. This aligns with representation-based ICL analyses in which the effective predictor is linear only after projecting to a learned or task-relevant representation. The ambient predictor can still be sensitive to nuisance directions through the explicit cross-subspace coefficient $\gamma_{\mathcal{D}}$. This dependence is unavoidable for ambient ridge regression unless one controls the nuisance covariance scale, e.g., through whitening, bounded effective nuisance energy, or an explicit bound on $R_\perp(\delta)$. Thus the theorem should be read as a decomposition result: concept inference is $r$-dimensional, while residual nuisance sensitivity is governed by empirical and population cross-subspace coupling.
\end{remark}

\subsection{Statistical and Bayesian Consequences}
Theorem~\ref{thm:concept_dimension} shows that the intrinsic estimation problem is $r$-dimensional and that deviations from exact nuisance invariance are controlled by cross-subspace coupling. We record two consequences.

\paragraph{(i) Bayesian concept inference.}
\begin{corollary}[Posterior interpretation]
\label{cor:bayes_concept}
Assume the noisy linear-Gaussian concept model $y=Z\beta+\varepsilon$, where $\beta\sim\mathcal{N}(0,I_r)$ and $\varepsilon\sim\mathcal{N}(0,\sigma^2 I_M)$. Then
\begin{equation}
    \beta\mid\mathcal{D}\sim\mathcal{N}(\mu_{\mathcal{D}},\Sigma_{\mathcal{D}}),\qquad \Sigma_{\mathcal{D}}=\left(I_r+\frac1{\sigma^2}Z^\top Z\right)^{-1},\qquad \mu_{\mathcal{D}}=\Sigma_{\mathcal{D}}\frac1{\sigma^2}Z^\top y.
\end{equation}
The Bayes predictor is $\mathbb{E}[y\mid x,\mathcal{D}]=\mu_{\mathcal{D}}^\top U^\top x$, and the posterior mean coincides with the concept-space ridge estimator in Lemma~\ref{lem:subspace_reduction} when $\lambda=\sigma^2/M$. This is the standard conjugate Bayesian linear regression posterior \cite{bishop2006pattern,murphy2012machine,yang2024promipl}, specialized to the concept coordinate space.
\end{corollary}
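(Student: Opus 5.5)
The plan is the standard Gaussian conjugacy computation, specialized to concept coordinates. First, fix $\mathcal{D}$ and treat $Z$ as given. This is legitimate because the demonstration inputs $x_i$ are drawn independently of $\beta$, so conditioning on them does not alter the prior $\beta\sim\mathcal{N}(0,I_r)$. The likelihood is then $p(y\mid Z,\beta)\propto\exp\!\big(-\tfrac{1}{2\sigma^2}\|y-Z\beta\|^2\big)$ and the prior is $p(\beta)\propto\exp(-\tfrac12\|\beta\|^2)$. Multiplying and expanding, the log-posterior equals, up to constants independent of $\beta$,
\[
-\tfrac12\,\beta^\top\Big(I_r+\tfrac{1}{\sigma^2}Z^\top Z\Big)\beta+\tfrac{1}{\sigma^2}\beta^\top Z^\top y .
\]
Completing the square identifies a Gaussian with precision $\Sigma_{\mathcal{D}}^{-1}=I_r+\sigma^{-2}Z^\top Z$ and mean $\mu_{\mathcal{D}}=\Sigma_{\mathcal{D}}\sigma^{-2}Z^\top y$. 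The precision matrix is positive definite because it is $I_r$ plus a positive semidefinite term, so no rank condition on $Z$ is needed.

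Next comes the Bayes predictor. For a fresh query $x$ with concept coordinates $z=U^\top x$, the target is $y=\langle\beta,z\rangle$, possibly plus independent zero-mean noise. The query is independent of $\mathcal{D}$ and of $\beta$, so linearity of conditional expectation gives
\[
\mathbb{E}[y\mid x,\mathcal{D}]=z^\top\mathbb{E}[\beta\mid\mathcal{D}]=\mu_{\mathcal{D}}^\top U^\top x .
\]

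Finally, I would match this to Lemma~\ref{lem:subspace_reduction}. Multiplying numerator and denominator by $\sigma^2$ gives
\[
\mu_{\mathcal{D}}=(\sigma^2 I_r+Z^\top Z)^{-1}Z^\top y ,
\]
which equals the concept-space ridge estimator $(Z^\top Z+M\lambda I_r)^{-1}Z^\top y$ exactly when $M\lambda=\sigma^2$, that is, when $\lambda=\sigma^2/M$.

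There is no real obstacle. The only point needing care is the interpretation of the final identification. Lemma~\ref{lem:subspace_reduction} is stated for the noiseless model, where it writes $\hat\beta_{\mathcal{D}}=A^{-1}S_{11}\beta$. Here I would use only its second form, $(Z^\top Z+M\lambda I_r)^{-1}Z^\top y$, read as a function of the observed $y$, which now includes noise. That form is well defined with noise, so the comparison is purely algebraic and depends only on the common normalization $S_{11}=Z^\top Z/M$.
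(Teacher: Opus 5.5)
Your proposal is correct and follows essentially the same route as the paper's proof: completing the square in the conjugate Gaussian posterior, taking the predictive mean $\mu_{\mathcal{D}}^\top U^\top x$, and rewriting $\mu_{\mathcal{D}}=(Z^\top Z+\sigma^2 I_r)^{-1}Z^\top y$ to match the concept-space ridge estimator when $M\lambda=\sigma^2$. You also spell out three points the paper leaves implicit, all of them correctly. First, you justify conditioning on $Z$ via independence of the inputs from $\beta$. Second, you note that the posterior precision is positive definite without any rank condition on $Z$. Third, you read Lemma~\ref{lem:subspace_reduction}'s second form as a function of the noisy $y$.
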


This posterior lives in the concept coordinate space rather than in the ambient parameter space.

\paragraph{(ii) Stability under nuisance perturbations.}
\begin{corollary}[Nuisance stability]
\label{cor:ood_stability}
Under the assumptions of Theorem~\ref{thm:concept_dimension} and under (BD), with probability at least $1-\delta$, for every $v\in\operatorname{span}(U)^\perp$,
\begin{equation}
    |f_{\mathcal{D}}(x+v)-f_{\mathcal{D}}(x)|\le K_{\mathrm{stab}}R_\perp(\delta)\|v\|\sqrt{\frac{r+\log(a_0/\delta)}{M}}.
\end{equation}
For the Bayes concept predictor $x\mapsto\mu_{\mathcal{D}}^\top U^\top x$, the same perturbation has exactly zero effect: $\mu_{\mathcal{D}}^\top U^\top(x+v)=\mu_{\mathcal{D}}^\top U^\top x$.
\end{corollary}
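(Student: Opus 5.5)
The plan is to obtain Corollary~\ref{cor:ood_stability} almost entirely from the invariance bound \eqref{eq:invariance_compact} in Theorem~\ref{thm:concept_dimension}, and to verify the Bayes-predictor claim by direct computation.

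For the ridge predictor, fix the event of probability at least $1-\delta$ on which Theorem~\ref{thm:concept_dimension} holds under (BD). Take any $v\in\operatorname{span}(U)^\perp$. Linearity of $f_{\mathcal{D}}$ gives $f_{\mathcal{D}}(x+v)-f_{\mathcal{D}}(x)=\langle\hat w_{\mathcal{D}},v\rangle$. Since $v=U_\perp U_\perp^\top v$, this equals $\langle U_\perp^\top\hat w_{\mathcal{D}},U_\perp^\top v\rangle=\langle\gamma_{\mathcal{D}},U_\perp^\top v\rangle$, using the identification $\gamma_{\mathcal{D}}=U_\perp^\top\hat w_{\mathcal{D}}$ from Lemma~\ref{lem:subspace_reduction}. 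Note that this event does not depend on $v$, so the bound is uniform over all such $v$. We then invoke \eqref{eq:invariance_compact}, which gives $|f_{\mathcal{D}}(x+v)-f_{\mathcal{D}}(x)|\le K_\Delta\eta_M(\delta)\|v\|$. Substituting the definition $\eta_M(\delta)=R_\perp(\delta)\sqrt{L_\delta/M}$ with $L_\delta=r+\log(a_0/\delta)$ produces exactly the stated bound with $K_{\mathrm{stab}}:=K_\Delta$.

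If one prefers not to rely on \eqref{eq:invariance_compact} as a black box, the underlying step is the bound $\|\gamma_{\mathcal{D}}\|\lesssim\eta_M(\delta)$. This is the main obstacle, and it sits inside the theorem. The route would be:
\begin{itemize}
\item bound $\|H^{-1}\|_{\mathrm{op}}\le 1/\lambda$, since $H\succeq\lambda I$ is the Schur complement of the positive semidefinite $S$ plus $\lambda I$;
\item control $S_{21}(\beta-\hat\beta_{\mathcal{D}})$ using that under (BD) the vectors $t_i$ are independent of $z_i$ and $\beta$;
\item observe that conditionally on the $z_i$, the quantity $\frac1M\sum_i t_i z_i^\top u$ is Gaussian with covariance $\frac{\|Zu\|^2}{M^2}\Lambda_{22}$, so a Gaussian norm concentration yields the factor $R_\perp(\delta)$;
\item combine this with the concept-error rate $\|\hat\beta_{\mathcal{D}}-\beta\|\le K_\beta\sqrt{L_\delta}/M$ and with $\lambda=\lambda_0/M$.
\end{itemize}
The scales then balance to $R_\perp(\delta)\sqrt{L_\delta/M}$, up to constants depending on $\Lambda_{11}$ and $\lambda_0$. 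I would cite the theorem rather than redo this step.

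For the Bayes concept predictor, the argument is exact and deterministic. Any $v\in\operatorname{span}(U)^\perp$ satisfies $U^\top v=0$, so $\mu_{\mathcal{D}}^\top U^\top(x+v)=\mu_{\mathcal{D}}^\top U^\top x+\mu_{\mathcal{D}}^\top U^\top v=\mu_{\mathcal{D}}^\top U^\top x$. This holds for every realization of $\mathcal{D}$, with no probability qualifier, and the predictor form is taken from Corollary~\ref{cor:bayes_concept}.
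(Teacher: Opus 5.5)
Your proposal is correct and matches the paper's proof: the ridge bound is the invariance estimate \eqref{eq:invariance_compact} of Theorem~\ref{thm:concept_dimension} with $\eta_M(\delta)=R_\perp(\delta)\sqrt{L_\delta/M}$ unpacked and $K_{\mathrm{stab}}=K_\Delta$, and the Bayes claim follows deterministically from $U^\top v=0$. (In your optional sketch of $\|\gamma_{\mathcal{D}}\|\lesssim\eta_M(\delta)$, using only the norm bound on $\hat\beta_{\mathcal{D}}-\beta$ also needs a high-probability upper bound on $\|S_{11}\|_{\mathrm{op}}$. The paper avoids this by writing $\beta-\hat\beta_{\mathcal{D}}=\lambda A^{-1}\beta$, so that $\lambda$ cancels against $\|H^{-1}\|_{\mathrm{op}}\le 1/\lambda$, and then only the lower bound $S_{11}\succeq\kappa_{11}I_r/2$ is required.)
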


\paragraph{Noisy labels.}
For concept-space ridge or Bayesian inference, homoscedastic label noise adds the usual $\sigma\sqrt{r/M}$ estimation term, as in standard finite-sample analyses of linear regression \cite{hastie2009elements,wainwright2019high}. The ambient ridge predictor satisfies the same exact block decomposition as Lemma~\ref{lem:subspace_reduction}, but with an additional off-subspace noise term. Therefore, any ambient-space noisy-label stability bound must control this term explicitly. The full derivation is given in Appendix~\ref{app:noisy}.

\subsection{Identifiability of the Concept Subspace}
We next formalize when the concept subspace is statistically recoverable. This question is related to the broader issue that latent concepts are generally identifiable only under additional structure or auxiliary information \cite{hendel2023incontext,liu2024incontext}. Consider $T$ tasks with coefficients $\beta_t\in\mathbb{R}^r$, observations $x_{ti}\sim\mathcal{N}(0,\Lambda)$, and labels $y_{ti}=\langle U\beta_t,x_{ti}\rangle+\varepsilon_{ti}$, where $\mathbb{E}[\varepsilon_{ti}\mid x_{ti},t]=0$.

\begin{definition}[Subspace identifiability up to rotation]
\label{def:identifiability}
The concept subspace is identifiable up to rotation if any alternative orthonormal representation $\tilde U\in\mathbb{R}^{d\times r}$ that induces the same task-conditioned linear predictors for all tasks satisfies $\operatorname{span}(\tilde U)=\operatorname{span}(U)$. In that case, the coordinate systems differ only by an orthogonal change of basis: $\tilde U=UR$ and $\tilde\beta_t=R^\top\beta_t$ for some $R\in\mathbb{R}^{r\times r}$ with $R^\top R=I_r$. 
\end{definition}

\begin{proposition}[Task-conditioned identifiability]
\label{prop:identifiability}
Assume $\Lambda\succ0$ is known and $\mathbb{E}[x\varepsilon\mid t]=0$. Let $B=[\beta_1,\dots,\beta_T]\in\mathbb{R}^{r\times T}$ have rank $r$. For each task define $m_t:=\mathbb{E}[xy\mid t]\in\mathbb{R}^d$ and $G_{xy}:=[m_1,\dots,m_T]$. Then
\begin{equation}
    m_t=\Lambda U\beta_t,\qquad G_{xy}=\Lambda U B,\qquad \operatorname{rank}(G_{xy})=r,
\end{equation}
and therefore
\begin{equation}
    \operatorname{span}(\Lambda^{-1}G_{xy})=\operatorname{span}(U).
\end{equation}
If $\beta_t\stackrel{i.i.d.}{\sim}\mathcal{N}(0,I_r)$ and $T\ge r$, then $\operatorname{rank}(B)=r$ almost surely. Hence the concept subspace is identifiable up to rotation from task-conditioned first moments. In contrast, the unconditional first moment satisfies $\mathbb{E}[xy]=\Lambda U\mathbb{E}[\beta]=0$ under the zero-mean task prior and cannot identify $U$ by itself. \looseness=-1
\end{proposition}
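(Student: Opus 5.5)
The plan is to compute the task-conditioned first moment directly and then reduce identifiability to a rank argument. Fix a task $t$ and use the label model $y=\langle U\beta_t,x\rangle+\varepsilon$. By linearity of conditional expectation,
\[
m_t=\mathbb{E}[xx^\top U\beta_t\mid t]+\mathbb{E}[x\varepsilon\mid t].
\]
The second term vanishes by the assumption $\mathbb{E}[x\varepsilon\mid t]=0$. For the first term, $x\sim\mathcal{N}(0,\Lambda)$ is drawn independently of the task index, so $\mathbb{E}[xx^\top\mid t]=\Lambda$. This gives $m_t=\Lambda U\beta_t$. Stacking these columns over $t$ yields $G_{xy}=\Lambda U B$.

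For the rank claim, we will use that $\Lambda$ is invertible and $U$ has orthonormal columns, so $\Lambda U$ has full column rank $r$. Since $B$ has rank $r$, the product $\Lambda U B$ has rank $r$: left multiplication by an injective map preserves rank, so $\operatorname{rank}(UB)=\operatorname{rank}(B)=r$, and then multiplying by invertible $\Lambda$ preserves rank again. Next, $\Lambda^{-1}G_{xy}=UB$, whose column space is contained in $\operatorname{span}(U)$. Both spaces have dimension $r$, so $\operatorname{span}(\Lambda^{-1}G_{xy})=\operatorname{span}(U)$.

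For identifiability in the sense of Definition~\ref{def:identifiability}, suppose $(\tilde U,\tilde\beta_t)$ induces the same task-conditioned linear predictors, i.e.\ $\tilde U\tilde\beta_t=U\beta_t$ for all $t$. Then $\Lambda\tilde U\tilde B=G_{xy}$, and hence $\operatorname{span}(\tilde U)\supseteq\operatorname{span}(\tilde U\tilde B)=\operatorname{span}(UB)=\operatorname{span}(U)$. Equality follows by comparing dimensions. Consequently $\tilde U=UR$ with $R=U^\top\tilde U$, and $R$ is orthogonal because $\tilde U^\top\tilde U=R^\top U^\top UR=R^\top R=I_r$. Finally, $UR\tilde\beta_t=U\beta_t$ gives $\tilde\beta_t=R^\top\beta_t$ after left-multiplying by $U^\top$ and then by $R^\top$.

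For the almost-sure rank statement, we take the first $r$ columns of $B$, which form an $r\times r$ Gaussian matrix. Its determinant is a nonzero polynomial in the entries, so its zero set has Lebesgue measure zero, and the Gaussian law is absolutely continuous. Hence this submatrix is invertible almost surely, and $\operatorname{rank}(B)=r$ whenever $T\ge r$. For the unconditional contrast, the tower property gives
\[
\mathbb{E}[xy]=\mathbb{E}[m_t]=\Lambda U\,\mathbb{E}[\beta]=0,
\]
which carries no information about $U$.

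The only step requiring care is the interpretation of ``same predictors'', which we read as equality of the coefficient vectors $\tilde U\tilde\beta_t=U\beta_t$. The rest is routine linear algebra.
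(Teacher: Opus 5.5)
Your proposal is correct and follows the paper's proof essentially step for step. You compute $m_t=\Lambda U\beta_t$ from $\mathbb{E}[x\varepsilon\mid t]=0$, stack to $G_{xy}=\Lambda UB$, and use the full column rank of $\Lambda U$ to get $\operatorname{span}(\Lambda^{-1}G_{xy})=\operatorname{span}(UB)=\operatorname{span}(U)$, with $\mathbb{E}[\beta]=0$ giving the unconditional contrast. The paper only asserts the $\tilde U=UR$, $\tilde\beta_t=R^\top\beta_t$ conclusion and the almost-sure rank claim, and your arguments for both are sound.
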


This justifies estimating a concept subspace $\widehat U$ from multi-task data, probes, or CCA-type objectives: the statistically meaningful target is the subspace, not a particular basis for its coordinates.

\section{From Concept-Subspace ICL to Mechanistic Predictions}
\label{sec:concept_mechanistic}
The preceding section is a functional statistical analysis. It shows that, in the linear concept-subspace model, task inference can be carried out in the low-dimensional coordinate $\beta$, while ambient nuisance sensitivity is mediated by cross-subspace leakage. This does not prove how a transformer implements ICL internally. It motivates a mechanistic hypothesis: if a transformer approximates concept-subspace inference, then task-relevant information should be concentrated in a low-dimensional, task-aligned activation subspace~\cite{todd2024function,merullo2024language}. This hypothesis is consistent with work on task vectors, function vectors, in-context vectors, and concept-based vector arithmetic in ICL \cite{dong2026understanding,bu2025provable}. \looseness=-1

\paragraph{Concept causal effect.}
If ICL proceeds by estimating a low-dimensional concept coordinate $\hat\beta(\mathcal{P})$ from the prompt $\mathcal{P}$, then one expects some internal representation to carry information about this estimate. The relevant empirical question is whether there exists a low-dimensional activation subspace whose state is causally sufficient for a substantial part of the in-context prediction. We operationalize this through a \emph{Concept Causal Effect} (CCE): starting from a corrupted prompt, where concept consistency is broken, we patch only a candidate concept subspace from the clean run. Activation patching and causal tracing are standard tools for such interventions \cite{meng2022locating}, but subspace interventions require careful controls because naive subspace patching can produce interpretability illusions \cite{makelov2024subspace}. Recovery of the clean behavior indicates that the patched subspace carries task information used by the model; failure of complementary-space or matched-rank controls argues against a purely intervention-size explanation.

\paragraph{Concept mediation hypothesis.}
The theory motivates, but does not assume, the following causal abstraction. Suppose that at some layer $\ell^\star$, the model state admits a decomposition $s(\mathcal{P})=T\hat\beta(\mathcal{P})+\text{nuisance}$, where the nuisance component has little direct effect on the output once $\hat\beta(\mathcal{P})$ is fixed. Then patching the component $T\hat\beta(\mathcal{P})$ should restore the model's prediction under corrupted prompts, whereas patching orthogonal directions should not. We treat this as a falsifiable mechanistic hypothesis rather than as a theorem about arbitrary transformers.

\paragraph{Testable predictions.}
The framework leads to three empirical predictions: (i) a small number of task-aligned activation directions should recover a substantial fraction of ICL behavior; (ii) patching the orthogonal complement or same-rank unrelated subspaces should not recover comparable behavior; and (iii) injecting the concept subspace from a different task should redirect predictions toward that task. The experiments below test these predictions directly, without claiming full circuit recovery. \looseness=-1

\section{Mechanistic Validation for Concept-Subspace ICL}
In this section, we test whether the theory-motivated concept subspace appears as a causally effective activation subspace in large language models. The goal is not full circuit recovery. Instead, we ask whether a compact, task-aligned residual-stream subspace carries a substantial fraction of the recoverable ICL signal under controlled interventions.

\subsection{Experimental Protocol and Metrics}
\noindent \textbf{Tasks.}
We evaluate structured multi-relation tasks from CounterFact~\cite{meng2022locating}. Retaining 34 mapping relations (e.g., country $\rightarrow$ capital), we evaluate 90 held-out queries per relation using 12-shot prompts. For each query, we contrast two conditions: a \textit{clean context} governed by a consistent concept and a \textit{corrupted context} that destroys concept consistency (e.g., via label shuffling) while preserving prompt format, length, and token statistics. This controlled corruption isolates concept mediation from superficial pattern matching. Moreover, Appendix~\ref{sec:crosslingual} evaluates implicit cross-lingual translation tasks, confirming that this mediation extends from factual relations to abstract rules.

\noindent \textbf{Evaluation Settings.}
We compare five primary conditions: (i) the corrupted baseline, (ii) the clean few-shot upper bound, (iii) full clean-to-corrupted patching, (iv) concept-subspace-only patching, and (v) complementary-subspace patching. We also evaluate two same-rank controls: a random orthogonal subspace and a cross-task subspace extracted from unrelated relations.

\noindent \textbf{Evaluation Metrics.}
We report top-1 accuracy for standard evaluation. For patching experiments, we additionally report the \emph{recovery rate}, i.e., the fraction of the clean--corrupted accuracy gap restored by the intervention: $\mathrm{Recovery~Rate}(\%) = 100 \cdot
\frac{\mathrm{Acc}_{\mathrm{patch}} - \mathrm{Acc}_{\mathrm{corr}}}
{\mathrm{Acc}_{\mathrm{clean}} - \mathrm{Acc}_{\mathrm{corr}}}$. For concept swapping and same-rank controls, we report \emph{override success}, i.e., the fraction of examples whose final prediction follows the injected target relation. More implementation details are provided in Appendix~\ref{subsec:implementation}.

\begin{figure*}[t]
  \centering
  \includegraphics[width=0.49\textwidth]{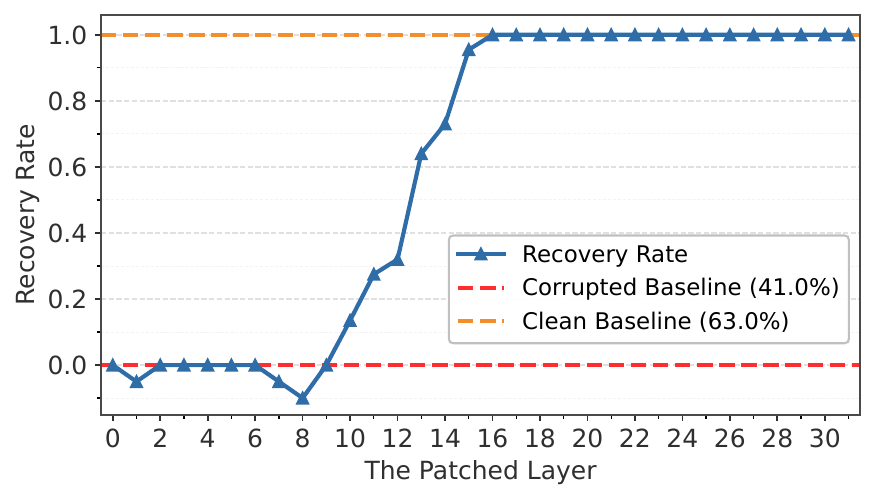}
  \hfill
  \includegraphics[width=0.49\textwidth]{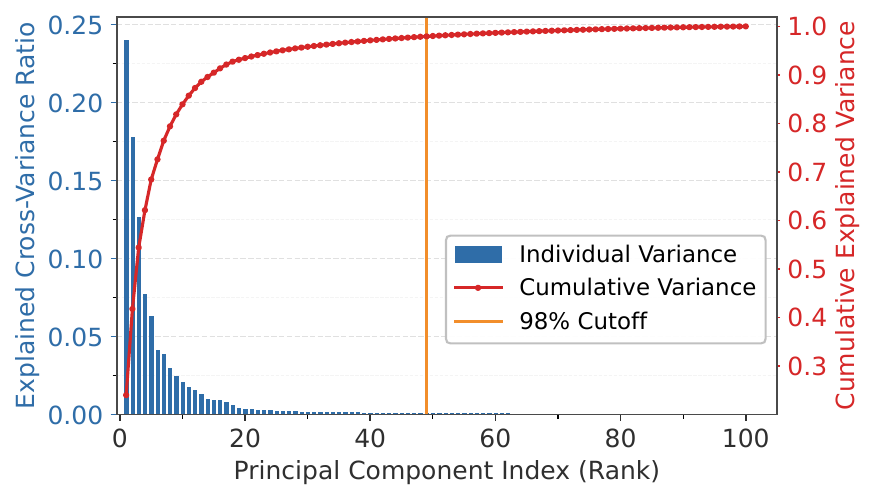}
  \caption{
    Layerwise localization and low-rank extraction of task-aligned residual directions (Llama-3-8B and CounterFact). \emph{Left}: clean-to-corrupted patching localizes layers where task information is causally accessible. \emph{Right}: the 98\% explained-cross-variance threshold selects a low-dimensional concept subspace at layer 30.
  }
  \label{fig:layerwise_and_scree}
\end{figure*}

\subsection{Localizing and Estimating the Concept Subspace}
\paragraph{Localizing the concept subspace.} 
We first localize where task-relevant information becomes available for in-context prediction. For each layer $\ell$, we run the Llama-3-8B model on clean and corrupted prompts, patch the clean query-token residual state $r_{\ell}^{\mathrm{clean}}$ into the corrupted run, and measure the resulting recovery. Figure~\ref{fig:layerwise_and_scree} (left) reveals a strict late-layer emergence of recovery: patching effects remain near zero through layers 0--14, become visible around layers 15--20, and rise steeply after layer 21. This suggests that task-level information is not uniformly distributed, but consolidates only after deep contextual integration. Consequently, we select layer 30 ($\ell^\star=30$) for all subsequent concept-subspace estimation and causal patching experiments.
Further layerwise analyses confirm the late-layer emergence of this causal concept bottleneck (see Appendix~\ref{appendix:layerwise_analysis}). This design choice is methodologically important: a linear subspace estimator is only meaningful once the task variable has become explicitly represented in the residual stream. We further discuss the results on Qwen2.5-7B~\cite{qwen_qwen25_2024} in Appendix~\ref{sec:qwen_results} and on the cross-lingual task~\cite{lample2018word} in Appendix \ref{sec:crosslingual}.
\begin{figure*}[t]
    \centering
    \includegraphics[width=0.49\textwidth]{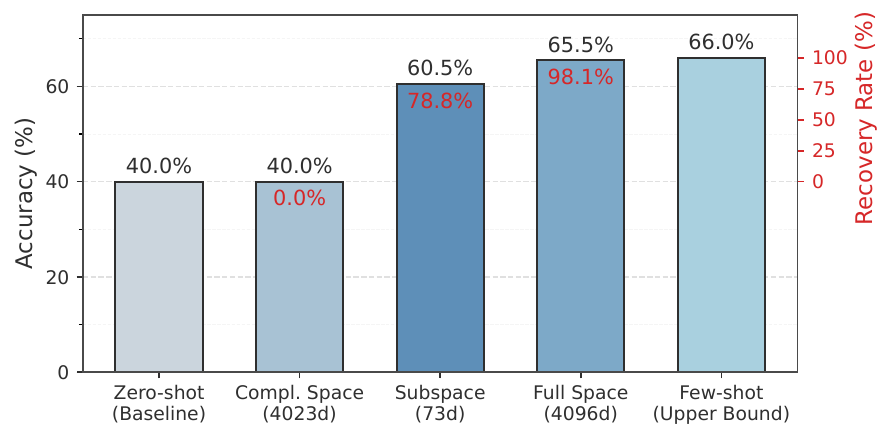}
    \hfill
    \includegraphics[width=0.49\textwidth]{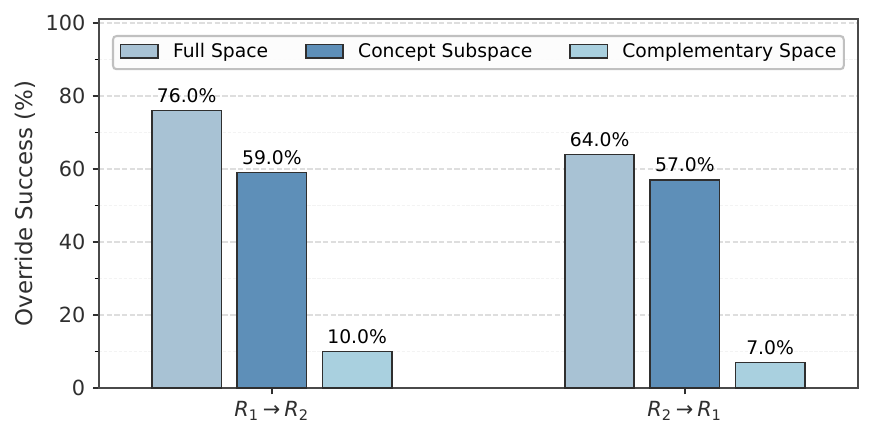}
    \caption{
    Concept-subspace patching and swapping. \emph{Left:} the learned subspace recovers most of the clean-corrupted gap. \emph{Right:} subspace swaps redirect predictions toward the injected relation.
    }
    \label{fig:main_interventions}
\end{figure*}

\paragraph{Estimating the concept subspace.} 
Guided by earlier localization, we estimate the task-aligned subspace at layer $\ell^\star=30$ using clean 12-shot prompts across 34 structured CounterFact relations (e.g., country $\rightarrow$ capital, entity $\rightarrow$ language), with 90 held-out queries per relation. To isolate relation-level directions from entity-specific variations, we take the query-token residual activations $H \in \mathbb{R}^{n \times d}$ and the aligned target-supervision matrix $Y$ to compute their cross-covariance $C_{HY}=\frac{1}{n}H^\top Y$. Performing SVD ($C_{HY}=U\Sigma V^\top$), we define the subspace by selecting the minimal rank $r$ that captures $\ge 98\%$ of the cumulative cross-variance: $\frac{\sum_{i=1}^{r}\sigma_i^2}{\sum_i \sigma_i^2} \ge 0.98$.
We then define the empirical concept subspace as $\widehat{U}_{\ell^\star}=U_{[:,1:r]}$. Equivalently, if one first fits a linear probe from $H$ to $Y$, $\widehat{U}_{\ell^\star}$ is given by the leading left-singular directions of the probe weight matrix. This construction is important because it suppresses entity-specific redundancy while preserving only those activation directions that are consistently associated with relation-level output structure across prompts.
Figure~\ref{fig:layerwise_and_scree} (right) reveals a sharply concentrated spectrum at layer~30. A 98\% explained-cross-variance threshold selects only a minimal fraction of directions in the representative run, and the cutoff remains in a narrow range across runs (see Appendix~\ref{subsec:stability_analyses} for evidence of the estimated rank's stability across varying estimation sizes and few-shot counts). 
This low rank relative to the 4096-dimensional stream suggests that ICL compresses demonstrations into a compact task representation rather than a diffuse code. While this spectral signature is purely correlational, it enables our causal test: if the concept-subspace hypothesis holds, intervening solely on $\widehat{U}_{\ell^\star}$ should recover most corrupted ICL behavior, whereas intervening on its orthogonal complement should prove ineffective.

\subsection{Concept-Subspace Patching Recovers ICL Behavior}
Let $P$ denote the orthogonal projector onto $\widehat U_{\ell^\star}$. Given clean and corrupted runs for the same query, we intervene on the corrupted residual stream by patching only the learned concept component $\tilde{r}_{\ell^\star} = r_{\ell^\star}^{\text{corr}} + P\left(r_{\ell^\star}^{\text{clean}} - r_{\ell^\star}^{\text{corr}} \right)$. We compare this intervention with full-state patching, complementary-space patching using $I-P$, and the unpatched zero-shot baseline.  
Figure~\ref{fig:main_interventions} (left) shows a strong asymmetry between the learned subspace and its orthogonal complement. The zero-shot baseline obtains $40.0\%$ accuracy, while the clean few-shot upper bound reaches $66.0\%$. Full activation patching raises accuracy to $65.5\%$, recovering $98.1\%$ of the clean--corrupted gap. Patching only the 73-dimensional concept subspace still reaches $60.5\%$, corresponding to $78.8\%$ recovery, despite modifying only $73/4096 \approx 1.8\%$ of the dimensions. In contrast, patching the 4023-dimensional complement leaves accuracy unchanged at $40.0\%$, yielding $0.0\%$ recovery.
Thus, the intervention effect is not proportional to the number of patched dimensions. Most recoverable task-level signal is concentrated in a small task-aligned subspace, while the high-dimensional complement is insufficient to restore behavior. Directional noise perturbation experiments confirm that this low-dimensional concept subspace acts as the central causal bottleneck governing the model's logical inference (see Figure~\ref{fig:robustness_to_noise_few_shot}). Additionally, the concept subspace demonstrates higher stability against few-shot label noise than the full space (Appendix~\ref{appendix:demo_stability}). The result supports a more precise claim: the learned subspace captures the dominant recoverable causal component of the ICL computation in these structured relation tasks (see Appendix~\ref{appendix:scaling} for scaling analyses confirming this pattern across 1 to 12 demonstrations).

\subsection{Concept Swaps and Matched-Rank Specificity Controls}
\paragraph{Concept swaps.}
To stringently test causal mediation via behavioral redirection, we extract clean layer-30 residual states for entities participating in two distinct relations ($R_1, R_2$) with corresponding concept parameters ($\beta_1, \beta_2$). During a source-relation forward pass, we inject target-relation information using three parallel interventions: (i) \emph{full-space replacement} (swapping the entire residual state); (ii) \emph{concept-subspace replacement} (swapping solely the $\widehat{U}_{\ell^\star}$ projection); and (iii) \emph{complementary-space replacement} (swapping only the orthogonal complement while retaining the source projection). We evaluate \emph{override success}—the fraction of predictions successfully flipped to the target relation. Systematic logical shifts induced specifically by subspace injection would firmly establish this low-dimensional representation as a causal mediator of in-context learning.

Figure~\ref{fig:main_interventions} (right) shows that directional control is concentrated in the learned subspace. Full-space replacement achieves $76.0\%$ for $R_1\!\rightarrow\!R_2$ and $64.0\%$ for $R_2\!\rightarrow\!R_1$, close to the native few-shot accuracies of the target relations ($75.0\%$ for $R_2$, $64.0\%$ for $R_1$). Replacing only the concept subspace still yields substantial override, whereas replacing the complementary space is largely ineffective. Therefore, manipulating just the minimal concept subspace is sufficient to invert the model's logical outputs, capturing the vast majority of the adversarial override achieved by full-space replacement. The inertness of the complementary space supports that the learned subspace as the dominant causal bottleneck governing in-context task execution.

\begin{table}[ht]
\centering
\begin{minipage}[t]{0.67\textwidth}
\paragraph{Matched-rank specificity controls.}
We ask whether the overwrite effect is specific to the learned concept subspace or can be reproduced by an arbitrary low-rank intervention. Using the same source--target relation setup as in the concept-swap experiment, we patch the target hidden state into the source inference trace after projecting the update onto one of three matched-dimensional subspaces at the target layer: the learned concept subspace $U_{\mathrm{learned}}$, a random subspace $U_{\mathrm{rand}}$, and a cross-task subspace $U_{\mathrm{cross}}$ estimated from an unrelated task. We report the adversarial overwrite success rate. 
\end{minipage}
\hfill
\begin{minipage}[t]{0.3\textwidth}

\caption{Matched-rank controls for override success.}
\label{tab:matched_rank_controls}
\centering
\small
\begin{tabularx}{\linewidth}{Xc}
    \hline
    Intervention & Success (\%) \\
    \hline
    None  & 2.5 \\
    $U_{\mathrm{learned}}$ & 58.8 \\
    $U_{\mathrm{rand}}$ & 2.3 \\
    $U_{\mathrm{cross}}$ & 3.3 \\
    \hline
\end{tabularx}
\end{minipage}
\end{table}
Table~\ref{tab:matched_rank_controls} shows that only the learned subspace ($U_{\mathrm{learned}}$) achieves substantial override ($58.8\%$), while matched-rank controls ($U_{\mathrm{rand}}$, $U_{\mathrm{cross}}$) remain near the $2.5\%$ baseline ($\le 3.3\%$). Since dimensionality is controlled, this gap suggests that causal efficacy stems specifically from task-aligned directions rather than arbitrary low-rank edits. Alongside the inert complementary space, this establishes the learned concept subspace as a highly specific causal mediator of in-context behavior.

\begin{figure}[t]
  \centering
  \includegraphics[width=0.85\linewidth]{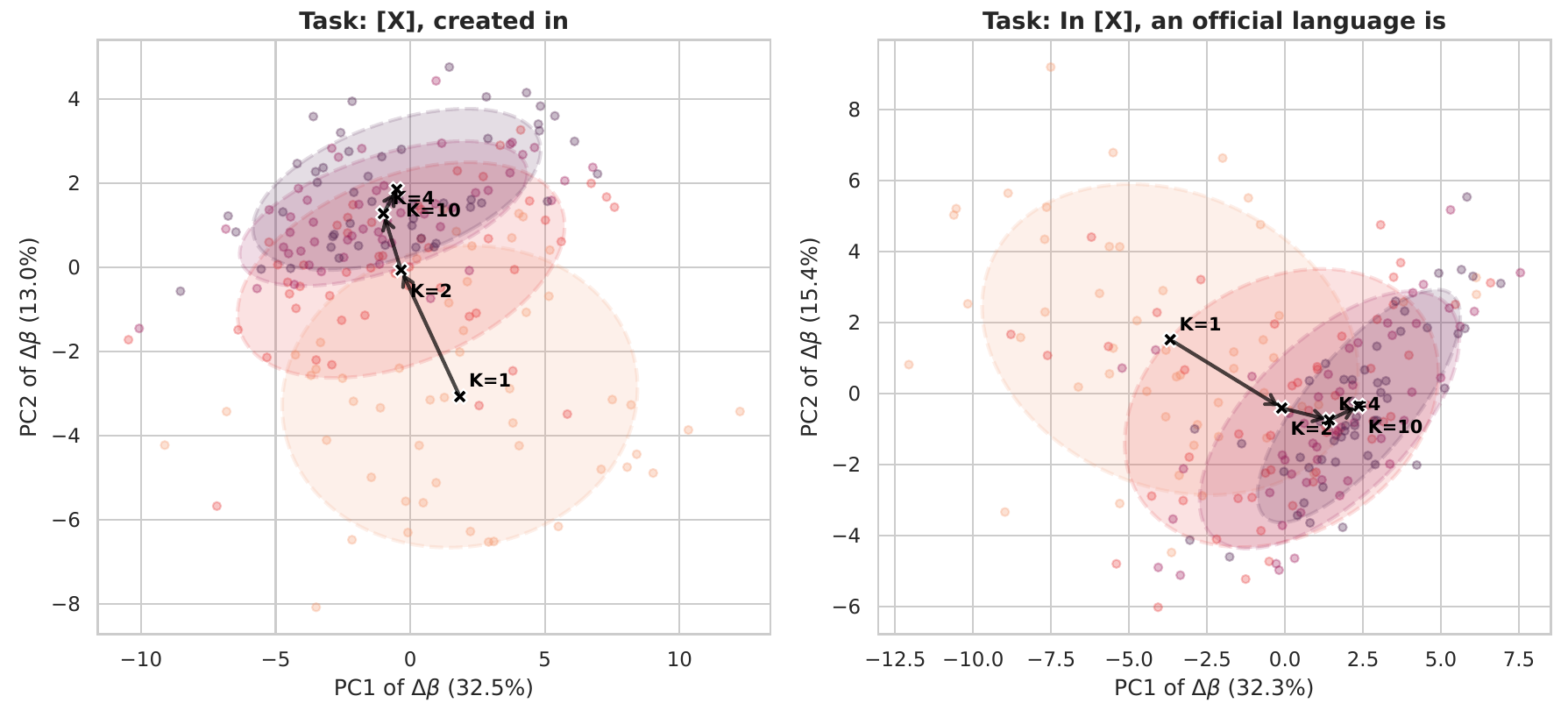}
  \caption{Few-shot contraction of debiased concept-coordinate estimates. Query-centered concept-coordinate displacements concentrate as the number of demonstrations increases.}
  \label{fig:debiased_fewshot_dynamics}
\end{figure}

\subsection{Few-Shot Contraction of Debiased Concept Coordinates}
To examine whether additional demonstrations reduce task-coordinate uncertainty, we isolate context-induced updates from query priors via debiased displacements. Specifically, at layer $\ell^\star$, the update for relation $r$, query $q$, and $K$-shot context $c$ is defined as $\Delta\beta_K^{(r,q,c)} = \beta_K^{(r,q,c)} - \beta_0^{(r,q)}$, where $\beta_0^{(r,q)}$ is the zero-shot baseline. Evaluating eight semantic relations (50 parallel contexts each, $K\in\{1,2,4,10\}$), we project $\Delta\beta_K$ onto its first two principal components, visualizing uncertainty dynamics through $K$-specific $1.5\sigma$ covariance ellipses. Figure~\ref{fig:debiased_fewshot_dynamics} shows two representative relations. At $K=1$, the displacement clouds have large covariance ellipses, indicating that a single demonstration yields a noisy, exemplar-dependent estimate. As $K$ increases, the clouds contract and their centroids stabilize, and the most visible contraction occurs from $K=1$ to $K=4$, with smaller changes from $K=4$ to $K=10$. The contraction pattern is relation-dependent: the ``created in'' relation exhibits a larger centroid shift, whereas the ``official language'' relation starts with a broader $K=1$ cloud and rapidly concentrates near the higher-shot centroid.
These trajectories support interpreting $\beta$-coordinates as few-shot estimates of task rules: after removing query-specific bias, the inferred coordinate becomes less sensitive to the sampled context as more demonstrations are provided.

\section{Conclusion}
\label{sec:conclusion}
We studied in-context learning through the lens of concept subspace learning. In the proposed linear model, structured task variation is represented by low-dimensional coordinates rather than by an unconstrained ambient predictor. Ridge and least-squares ICL proxies decompose into concept-coordinate regression and off-subspace leakage: the intrinsic task estimate is governed by the concept dimension under explicit covariance assumptions, while ambient nuisance effects enter through identifiable leakage terms.
This decomposition motivates a mechanistic prediction for transformers: in structured task families, recoverable task information should be concentrated in a compact, task-aligned activation subspace. Our experiments support this prediction on CounterFact-derived relation tasks. A low-dimensional residual-stream subspace recovers most of the clean--corrupted ICL gap, while complementary and matched-rank control subspaces have little effect. Concept-swap interventions further show that the same subspace can redirect predictions toward an injected relation. Additional experiments on another model and on a cross-lingual rule task show the same qualitative pattern.
These findings suggest that concept subspaces provide a useful bridge between statistical accounts of ICL and causal activation-space analyses. They do not imply that full ICL circuits are low-dimensional, nor that a single subspace explains all prompting behavior. Accordingly, broader evaluation across task families, model scales, languages, prompting regimes, and subspace estimators remains necessary to determine the scope of concept-subspace mediation.

\bibliographystyle{unsrtnat}
\bibliography{main}

\newpage
\appendix

\section{Additional Theory and Proofs}
\label{app:theory}
\subsection{Noisy labels in concept space}
\label{app:noisy}
We extend Equation~\eqref{eq:demo_gen} to homoscedastic noise:
\begin{equation}
    y_i=\langle U\beta,x_i\rangle+\varepsilon_i,\qquad \varepsilon_i\stackrel{i.i.d.}{\sim}\mathcal{N}(0,\sigma^2),\qquad \beta\sim\mathcal{N}(0,I_r).
\label{eq:noisy_demo}
\end{equation}
Let $Z\in\mathbb{R}^{M\times r}$ have rows $z_i^\top=(U^\top x_i)^\top$, let $y=(y_1,\dots,y_M)^\top$, and define the concept-space ridge estimator $\hat\beta_{\mathcal{D}}=(Z^\top Z+M\lambda I_r)^{-1}Z^\top y$.

\begin{corollary}[Noisy concept inference]
\label{cor:noisy_concept_inference}
Assume Equation~\eqref{eq:noisy_demo}, $\Lambda_{11}\succ0$, and $\lambda=\lambda_0/M$ for fixed $\lambda_0>0$. Let $L_\delta=r+\log(a_0/\delta)$. If $M\ge K_0 L_\delta$, then with probability at least $1-\delta$,
\begin{equation}
    \|\hat\beta_{\mathcal{D}}-\beta\|\le K_1\left(\frac{\sqrt{L_\delta}}{M}+\sigma\sqrt{\frac{L_\delta}{M}}\right).
\label{eq:noisy_full_beta}
\end{equation}
Consequently, for the concept predictor $\tilde f_{\mathcal{D}}(x):=(U^\top x)^\top\hat\beta_{\mathcal{D}}$,
\begin{equation}
    |\tilde f_{\mathcal{D}}(x)-\langle U\beta,x\rangle|\le \|U^\top x\|K_1\left(\frac{\sqrt{L_\delta}}{M}+\sigma\sqrt{\frac{L_\delta}{M}}\right).
\label{eq:noisy_full_pred}
\end{equation}
Moreover, $\tilde f_{\mathcal{D}}(x+v)=\tilde f_{\mathcal{D}}(x)$ for every $v\in\operatorname{span}(U)^\perp$.
\end{corollary}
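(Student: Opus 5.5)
We will prove the corollary by working entirely in concept coordinates. Write $y=Z\beta+\varepsilon$ with $\varepsilon=(\varepsilon_1,\dots,\varepsilon_M)^\top$, and set $\hat\Sigma:=\frac1M Z^\top Z=S_{11}$. Then the concept-space ridge estimator splits exactly as
\begin{equation}
\hat\beta_{\mathcal{D}}-\beta=\underbrace{-\lambda_0 (M\hat\Sigma+\lambda_0 I_r)^{-1}\beta}_{\text{bias}}+\underbrace{(M\hat\Sigma+\lambda_0 I_r)^{-1}Z^\top\varepsilon}_{\text{noise}},
\end{equation}
using $M\lambda=\lambda_0$. The noiseless part is the same object bounded in Theorem~\ref{thm:concept_dimension}. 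For it, we reuse that argument or reprove it directly as follows. On the event $\hat\Sigma\succeq\frac12\lambda_{\min}(\Lambda_{11})I_r$, we get $\|(M\hat\Sigma+\lambda_0 I)^{-1}\|_{\mathrm{op}}\le 2/(M\lambda_{\min}(\Lambda_{11}))$. Together with a Gaussian norm bound $\|\beta\|\lesssim\sqrt{r}+\sqrt{\log(1/\delta)}\lesssim\sqrt{L_\delta}$, this gives the bias bound $K\sqrt{L_\delta}/M$.

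The first substantive step is the spectral event. The rows $z_i\sim\mathcal N(0,\Lambda_{11})$ are i.i.d. in $\mathbb{R}^r$, so standard Gaussian covariance concentration (e.g. Wainwright, Thm.~6.1) gives the following with probability $\ge1-\delta/3$:
\begin{equation}
\|\Lambda_{11}^{-1/2}\hat\Sigma\Lambda_{11}^{-1/2}-I_r\|_{\mathrm{op}}\le C\left(\sqrt{L_\delta/M}+L_\delta/M\right).
\end{equation}
Choosing $K_0$ large makes this at most $1/2$. That yields the lower eigenvalue bound, and also $\|\hat\Sigma\|_{\mathrm{op}}\le\frac32\|\Lambda_{11}\|_{\mathrm{op}}$. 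This is where $\Lambda_{11}\succ0$ and the dependence of $K_0$ on $\Lambda_{11}$ enter.

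The main step is the noise term. Conditional on $Z$, the vector $(M\hat\Sigma+\lambda_0 I)^{-1}Z^\top\varepsilon$ is Gaussian with covariance
\begin{equation}
\sigma^2 (M\hat\Sigma+\lambda_0I)^{-1}M\hat\Sigma(M\hat\Sigma+\lambda_0I)^{-1}\preceq \frac{\sigma^2}{M}\hat\Sigma^{-1}.
\end{equation}
We then apply a Gaussian norm tail bound (Hanson--Wright, or Lipschitz concentration of $\|\Gamma^{1/2}g\|$). This gives, with conditional probability $\ge1-\delta/3$, a bound $\sigma\sqrt{\operatorname{tr}(\Gamma)}+\sigma\sqrt{2\|\Gamma\|_{\mathrm{op}}\log(3/\delta)}$, where $\Gamma=\hat\Sigma^{-1}/M$. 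On the spectral event this is at most $K\sigma\sqrt{(r+\log(3/\delta))/M}\le K\sigma\sqrt{L_\delta/M}$, after absorbing constants into $a_0$. The step needing the most care is this conditioning argument. Since $\varepsilon$ is independent of $Z$ and $\beta$, the conditional bound holds for every fixed $Z$ in the good event, and a union bound over the three events (covariance, $\|\beta\|$, noise) completes Equation~\eqref{eq:noisy_full_beta} with $K_1$ depending on $\Lambda_{11}$ and $\lambda_0$.

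Finally, Equation~\eqref{eq:noisy_full_pred} follows from Cauchy--Schwarz. Write $\tilde f_{\mathcal{D}}(x)-\langle U\beta,x\rangle=(U^\top x)^\top(\hat\beta_{\mathcal{D}}-\beta)$, using $\langle U\beta,x\rangle=\beta^\top U^\top x$. The invariance claim is immediate: for $v\in\operatorname{span}(U)^\perp$ we have $U^\top v=0$, so $U^\top(x+v)=U^\top x$. This requires no probabilistic argument at all, in contrast to the ambient bound in Theorem~\ref{thm:concept_dimension}.
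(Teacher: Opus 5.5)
Your proof is correct and follows the paper's argument essentially step for step. You use the same bias--noise split of $\hat\beta_{\mathcal{D}}-\beta$, the same lower-eigenvalue event for $\frac1M Z^\top Z$ combined with $\|\beta\|\lesssim\sqrt{L_\delta}$ for the bias, a Gaussian tail conditional on $Z$ for the noise term, and then Cauchy--Schwarz and $U^\top v=0$ for the prediction and invariance claims. The only cosmetic difference is in the noise step: you dominate the conditional noise covariance by $\frac{\sigma^2}{M}\hat\Sigma^{-1}$ and apply a trace/operator-norm tail, whereas the paper writes the noise as $(S_Z+\lambda I_r)^{-1}S_Z^{1/2}g$ with isotropic $g$ and applies a $\chi^2$ tail, which yields the same estimate.
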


\begin{remark}[Ambient ridge with noisy labels]
For the ambient ridge predictor $\hat w_{\mathcal{D}}=(S+\lambda I_d)^{-1}M^{-1}\sum_i x_i y_i$, the same block decomposition as Lemma~\ref{lem:subspace_reduction} holds, but the bottom coordinate obtains an additional off-subspace noise term. This term can depend on the nuisance covariance scale and regularization. Therefore, the clean $\sigma\sqrt{r/M}$ rate in Corollary~\ref{cor:noisy_concept_inference} should be interpreted as a concept-space or Bayesian inference statement unless additional assumptions control off-subspace noise amplification.
\end{remark}

\subsection{Auxiliary concentration facts}
We use two standard Gaussian concentration facts for non-asymptotic covariance and Gaussian tail bounds. First, if $z_i\stackrel{i.i.d.}{\sim}\mathcal{N}(0,\Lambda_{11})$ and $S_{11}=M^{-1}Z^\top Z$, then for $M\ge K(r+\log(a_0/\delta))$,
\begin{equation}
    S_{11}\succeq \frac12\lambda_{\min}(\Lambda_{11})I_r
\label{eq:aux_cov_lower}
\end{equation}
with probability at least $1-\delta$, where $K$ depends only on $\Lambda_{11}$. Second, if $g\sim\mathcal{N}(0,\Sigma)$, then with probability at least $1-\delta$,
\begin{equation}
    \|g\|\le \sqrt{\operatorname{tr}(\Sigma)}+\sqrt{2\|\Sigma\|_{\mathrm{op}}\log(1/\delta)}.
\label{eq:aux_gaussian_norm}
\end{equation}
Throughout the appendix, constants may change from line to line. The universal constant $a_0$ inside $L_\delta=r+\log(a_0/\delta)$ is chosen large enough to absorb union bounds.

\subsection{Proof of Lemma~\ref{lem:subspace_reduction}}
\begin{proof}
Let $Q=[U\ U_\perp]$. In the $Q$-basis, write $Q^\top\hat w_{\mathcal{D}}=(\alpha^\top,\gamma^\top)^\top$ and
\begin{equation}
    Q^\top S Q=\begin{bmatrix}S_{11}&S_{12}\\ S_{21}&S_{22}\end{bmatrix}.
\end{equation}
Since $w=U\beta$, the ridge normal equations are
\begin{equation}
    \begin{bmatrix}S_{11}+\lambda I_r&S_{12}\\ S_{21}&S_{22}+\lambda I_{d-r}\end{bmatrix}\begin{bmatrix}\alpha\\ \gamma\end{bmatrix}=\begin{bmatrix}S_{11}\beta\\ S_{21}\beta\end{bmatrix}.
\end{equation}
Set $A=S_{11}+\lambda I_r$, $B=S_{12}$, $D=S_{22}+\lambda I_{d-r}$, and $H=D-S_{21}A^{-1}B$. Since $S+\lambda I_d\succ0$, its Schur complement satisfies $H\succ0$. The first block equation gives $\alpha=A^{-1}S_{11}\beta-A^{-1}B\gamma=\hat\beta_{\mathcal{D}}-A^{-1}B\gamma$, where $\hat\beta_{\mathcal{D}}=A^{-1}S_{11}\beta$. Substituting this into the second block equation yields $H\gamma=S_{21}(\beta-\hat\beta_{\mathcal{D}})$, hence $\gamma=\gamma_{\mathcal{D}}=H^{-1}S_{21}(\beta-\hat\beta_{\mathcal{D}})$. Finally, for $x=Uz+U_\perp t$,
\begin{equation}
    f_{\mathcal{D}}(x)=z^\top\alpha+t^\top\gamma=z^\top\hat\beta_{\mathcal{D}}+\big(t^\top-z^\top A^{-1}B\big)\gamma_{\mathcal{D}}.
\end{equation}
The identity $\hat\beta_{\mathcal{D}}=(Z^\top Z+M\lambda I_r)^{-1}Z^\top y$ follows from $S_{11}=M^{-1}Z^\top Z$ and $y=Z\beta$.
\end{proof}

\subsection{Proof of Theorem~\ref{thm:concept_dimension}}
\begin{proof}
Let $\kappa_{11}:=\lambda_{\min}(\Lambda_{11})$. On the event in Equation~\eqref{eq:aux_cov_lower}, $S_{11}\succeq \kappa_{11}I_r/2$. Also, with probability at least $1-\delta/a_0$, $\|\beta\|\le K\sqrt{L_\delta}$. Since $\hat\beta_{\mathcal{D}}=(S_{11}+\lambda I_r)^{-1}S_{11}\beta$, we have
\begin{equation}
    \hat\beta_{\mathcal{D}}-\beta=-\lambda(S_{11}+\lambda I_r)^{-1}\beta.
\end{equation}
With $\lambda=\lambda_0/M$, this gives
\begin{equation}
    \|\hat\beta_{\mathcal{D}}-\beta\|\le \frac{\lambda_0/M}{\kappa_{11}/2+\lambda_0/M}\|\beta\|\le K_\beta\frac{\sqrt{L_\delta}}{M},
\end{equation}
which proves the concept-estimation bound.

We next control the off-subspace leakage. By Lemma~\ref{lem:subspace_reduction}, $\Delta_{\mathcal{D}}(x)=(t^\top-z^\top A^{-1}B)\gamma_{\mathcal{D}}$, where $\gamma_{\mathcal{D}}=H^{-1}S_{21}(\beta-\hat\beta_{\mathcal{D}})$. Since $S+\lambda I_d\succeq\lambda I_d$, the Schur complement satisfies $H\succeq\lambda I_{d-r}$, hence $\|H^{-1}\|_{\mathrm{op}}\le1/\lambda$. Combining this with $\beta-\hat\beta_{\mathcal{D}}=\lambda A^{-1}\beta$ gives
\begin{equation}
    \|\gamma_{\mathcal{D}}\|\le \|S_{21}A^{-1}\beta\|.
\end{equation}
Under (BD), conditional on $Z$ and $\beta$, the nuisance rows $t_i=U_\perp^\top x_i$ are independent of $z_i$ and distributed as $\mathcal{N}(0,\Lambda_{22})$. Therefore, for any fixed $a\in\mathbb{R}^r$,
\begin{equation}
    S_{21}a=\frac1M\sum_{i=1}^M t_i z_i^\top a\ \bigg|\ Z,a\sim\mathcal{N}\left(0,\frac{a^\top S_{11}a}{M}\Lambda_{22}\right).
\end{equation}
Taking $a=A^{-1}\beta$ and using $S_{11}\succeq\kappa_{11}I_r/2$, we obtain $a^\top S_{11}a\le K\|\beta\|^2$. Applying Equation~\eqref{eq:aux_gaussian_norm} conditionally and then the bound on $\|\beta\|$ gives
\begin{equation}
    \|\gamma_{\mathcal{D}}\|\le K R_\perp(\delta)\sqrt{\frac{L_\delta}{M}}=K\eta_M(\delta).
\label{eq:proof_gamma_bound}
\end{equation}
Similarly, conditional on $Z$ and the query concept coordinate $z$, applying the same argument with $a=A^{-1}z$ gives
\begin{equation}
    \|S_{21}A^{-1}z\|\le K R_\perp(\delta)\frac{\|z\|}{\sqrt M}.
\label{eq:proof_zeta_bound}
\end{equation}
Consequently,
\begin{equation}
    \begin{aligned}
    |\Delta_{\mathcal{D}}(x)|&\le \big(\|t\|+\|S_{21}A^{-1}z\|\big)\|\gamma_{\mathcal{D}}\|\\
    &\le K_\Delta\left(\|t\|+\frac{R_\perp(\delta)}{\sqrt M}\|z\|\right)\eta_M(\delta).
    \end{aligned}
\end{equation}
The prediction decomposition follows from
\begin{equation}
    f_{\mathcal{D}}(x)-\langle U\beta,x\rangle=z^\top(\hat\beta_{\mathcal{D}}-\beta)+\Delta_{\mathcal{D}}(x),
\end{equation}
and the final prediction bound follows by substituting the two estimates above.

It remains to prove invariance. If $v\in\operatorname{span}(U)^\perp$, then $U^\top(x+v)=U^\top x$ and $U_\perp^\top(x+v)=t+U_\perp^\top v$. The concept term and the $z^\top A^{-1}B\gamma_{\mathcal{D}}$ correction cancel, so
\begin{equation}
    f_{\mathcal{D}}(x+v)-f_{\mathcal{D}}(x)=\langle \gamma_{\mathcal{D}},U_\perp^\top v\rangle.
\end{equation}
Using Equation~\eqref{eq:proof_gamma_bound} gives $|f_{\mathcal{D}}(x+v)-f_{\mathcal{D}}(x)|\le K_\Delta\eta_M(\delta)\|v\|$.

For (NBD), use the Gaussian conditioning formula
\begin{equation}
    t_i=\Lambda_{21}\Lambda_{11}^{-1}z_i+\xi_i,\qquad \xi_i\sim\mathcal{N}(0,\Lambda_{22}-\Lambda_{21}\Lambda_{11}^{-1}\Lambda_{12}),
\end{equation}
where $\xi_i$ is independent of $z_i$. The residual part is bounded as in the BD case, since its covariance is dominated by $\Lambda_{22}$. The conditional mean contributes
\begin{equation}
    \|\Lambda_{21}\Lambda_{11}^{-1}S_{11}A^{-1}\beta\|\le K\rho\|\beta\|\le K_\rho\rho\sqrt{L_\delta}
\end{equation}
to $\|\gamma_{\mathcal{D}}\|$, and similarly contributes at most $K_\rho'\rho\|z\|$ to $\|S_{21}A^{-1}z\|$. This yields the stated replacements in the NBD bound.
\end{proof}

\subsection{Proof of Corollary~\ref{cor:noisy_concept_inference}}
\begin{proof}
Let $S_Z=M^{-1}Z^\top Z$. On the event $S_Z\succeq\lambda_{\min}(\Lambda_{11})I_r/2$, which holds with probability at least $1-\delta/a_0$ when $M\ge K_0L_\delta$, and on the event $\|\beta\|\le K\sqrt{L_\delta}$, write $y=Z\beta+\varepsilon$. Then
\begin{equation}
    \hat\beta_{\mathcal{D}}-\beta=-\lambda(S_Z+\lambda I_r)^{-1}\beta+(S_Z+\lambda I_r)^{-1}\frac1M Z^\top\varepsilon.
\end{equation}
For the bias term, $\lambda=\lambda_0/M$ gives
\begin{equation}
    \left\|\lambda(S_Z+\lambda I_r)^{-1}\beta\right\|\le K\frac{\sqrt{L_\delta}}{M}.
\end{equation}
For the variance term, condition on $Z$. Since $M^{-1}Z^\top\varepsilon\sim\mathcal{N}(0,\sigma^2S_Z/M)$, we may write
\begin{equation}
    (S_Z+\lambda I_r)^{-1}\frac1MZ^\top\varepsilon=(S_Z+\lambda I_r)^{-1}S_Z^{1/2}g,\qquad g\sim\mathcal{N}\left(0,\frac{\sigma^2}{M}I_r\right).
\end{equation}
On $S_Z\succeq\lambda_{\min}(\Lambda_{11})I_r/2$, the operator norm $\|(S_Z+\lambda I_r)^{-1}S_Z^{1/2}\|_{\mathrm{op}}$ is bounded by a constant depending only on $\Lambda_{11}$. A $\chi^2$ tail for $\|g\|$ therefore gives
\begin{equation}
    \left\|(S_Z+\lambda I_r)^{-1}\frac1MZ^\top\varepsilon\right\|\le K\sigma\sqrt{\frac{L_\delta}{M}}
\end{equation}
with probability at least $1-\delta/a_0$. Combining the bias and variance bounds proves Equation~\eqref{eq:noisy_full_beta}. The prediction bound follows from
\begin{equation}
    |\tilde f_{\mathcal{D}}(x)-\langle U\beta,x\rangle|=|z^\top(\hat\beta_{\mathcal{D}}-\beta)|\le \|z\|\|\hat\beta_{\mathcal{D}}-\beta\|.
\end{equation}
Finally, if $v\in\operatorname{span}(U)^\perp$, then $U^\top(x+v)=U^\top x$, hence $\tilde f_{\mathcal{D}}(x+v)=\tilde f_{\mathcal{D}}(x)$.
\end{proof}

\subsection{Noisy ambient decomposition}
For completeness, we record the exact ambient decomposition with noisy labels. Let $T_\perp\in\mathbb{R}^{M\times(d-r)}$ have rows $t_i^\top=(U_\perp^\top x_i)^\top$, and define $e_z=M^{-1}Z^\top\varepsilon$ and $e_t=M^{-1}T_\perp^\top\varepsilon$. In the basis $[U\ U_\perp]$, the normal equations are
\begin{equation}
    \begin{bmatrix}A&B\\ S_{21}&D\end{bmatrix}\begin{bmatrix}\alpha\\ \gamma\end{bmatrix}=\begin{bmatrix}S_{11}\beta+e_z\\ S_{21}\beta+e_t\end{bmatrix}.
\end{equation}
The concept-space noisy ridge estimator is $\hat\beta_{\mathcal{D}}=A^{-1}(S_{11}\beta+e_z)$. Solving the block equations gives
\begin{equation}
    \alpha=\hat\beta_{\mathcal{D}}-A^{-1}B\gamma_{\mathcal{D}},\qquad \gamma_{\mathcal{D}}=H^{-1}\big(S_{21}(\beta-\hat\beta_{\mathcal{D}})+e_t\big).
\end{equation}
Thus
\begin{equation}
    f_{\mathcal{D}}(x)=z^\top\hat\beta_{\mathcal{D}}+\big(t^\top-z^\top A^{-1}B\big)\gamma_{\mathcal{D}}.
\end{equation}
Compared with the noise-free case, the additional term $H^{-1}e_t$ is the off-subspace noise amplification. It is not governed solely by $r$ without further assumptions on the nuisance design, covariance scale, or regularization.

\subsection{Proof of Corollary~\ref{cor:bayes_concept}}
\begin{proof}
The model $y=Z\beta+\varepsilon$, $\beta\sim\mathcal{N}(0,I_r)$, and $\varepsilon\sim\mathcal{N}(0,\sigma^2I_M)$ is conjugate Gaussian linear regression \cite{bishop2006pattern,murphy2012machine}. Completing the square in the posterior density gives
\begin{equation}
    \Sigma_{\mathcal{D}}=\left(I_r+\frac1{\sigma^2}Z^\top Z\right)^{-1},\qquad \mu_{\mathcal{D}}=\Sigma_{\mathcal{D}}\frac1{\sigma^2}Z^\top y.
\end{equation}
The posterior predictive mean at query $x$ is $\mu_{\mathcal{D}}^\top U^\top x$. Since
\begin{equation}
    \mu_{\mathcal{D}}=(Z^\top Z+\sigma^2 I_r)^{-1}Z^\top y,
\end{equation}
it coincides with the concept-space ridge estimator $(Z^\top Z+M\lambda I_r)^{-1}Z^\top y$ when $M\lambda=\sigma^2$, i.e., $\lambda=\sigma^2/M$.
\end{proof}

\subsection{Proof of Corollary~\ref{cor:ood_stability}}
\begin{proof}
The ambient ridge statement is exactly the invariance bound in Theorem~\ref{thm:concept_dimension}, after renaming constants:
\begin{equation}
    |f_{\mathcal{D}}(x+v)-f_{\mathcal{D}}(x)|\le K_{\mathrm{stab}}R_\perp(\delta)\|v\|\sqrt{\frac{r+\log(a_0/\delta)}{M}}.
\end{equation}
For the Bayes concept predictor, if $v\in\operatorname{span}(U)^\perp$, then $U^\top(x+v)=U^\top x$. Hence $\mu_{\mathcal{D}}^\top U^\top(x+v)=\mu_{\mathcal{D}}^\top U^\top x$, so the Bayes concept predictor is exactly invariant to nuisance perturbations.
\end{proof}

\subsection{Proof of Proposition~\ref{prop:identifiability}}
\begin{proof}
For each task $t$, the label model is $y=\langle U\beta_t,x\rangle+\varepsilon$. Since $\mathbb{E}[x\varepsilon\mid t]=0$,
\begin{equation}
    m_t=\mathbb{E}[xy\mid t]=\mathbb{E}[xx^\top]U\beta_t+\mathbb{E}[x\varepsilon\mid t]=\Lambda U\beta_t.
\end{equation}
Stacking the task-conditioned moments gives $G_{xy}=[m_1,\dots,m_T]=\Lambda U B$. Because $\Lambda\succ0$, $U$ has full column rank, and $B$ has rank $r$, we have $\operatorname{rank}(G_{xy})=r$. Moreover,
\begin{equation}
    \operatorname{span}(\Lambda^{-1}G_{xy})=\operatorname{span}(UB)=\operatorname{span}(U).
\end{equation}
If $\beta_t\stackrel{i.i.d.}{\sim}\mathcal{N}(0,I_r)$ and $T\ge r$, then $B$ has rank $r$ almost surely. The remaining ambiguity is only the choice of orthonormal basis within the recovered subspace: any two orthonormal bases of the same $r$-dimensional subspace differ by an orthogonal matrix $R$, giving $\tilde U=UR$ and $\tilde\beta_t=R^\top\beta_t$. Finally, under the zero-mean task prior, the unconditional first moment is
\begin{equation}
    \mathbb{E}[xy]=\mathbb{E}_\beta[\Lambda U\beta]=\Lambda U\mathbb{E}[\beta]=0,
\end{equation}
Therefore, task conditioning is necessary for this first-moment identification argument.
\end{proof}

\section{Additional Experimental Details and Analyses}
\subsection{Implementation Details}
\label{subsec:implementation}
\paragraph{Models and infrastructure.}
We run experiments on open-source language models, primarily Llama-3-8B and Qwen2.5-7B. To facilitate precise extraction and manipulation of internal representations, we implement all forward passes and causal interventions (activation patching) using the \texttt{transformer\_lens} library. All experiments are executed on one NVIDIA V100 GPU. Our software environment is built upon Python 3.10, PyTorch 2.0, and CUDA 11.8.

\paragraph{Dataset and prompt construction.}
Our task and relation data are derived from the CounterFact dataset. To ensure robust statistical estimation, we filter the dataset to retain only relations with a sufficient number of unique subject-target pairs (e.g., $N \ge 90$ depending on the specific experiment). When constructing $n$-shot in-context learning prompts, we randomly sample $n$ demonstration pairs (e.g., $n=12$) from the same relation pool, ensuring that the demonstration subjects are disjoint from the test query subject to prevent data leakage. For tasks requiring different prompt formats (e.g., symbolic, QA, or verbose), the same set of subject-target pairs is injected into distinct string templates to control for semantic equivalence.

\paragraph{Subspace estimation protocol.}
To estimate the concept subspace $\widehat{U}_{\ell^\star}$ at a target layer $\ell^\star$ (typically late-middle layers such as $26$ or $30$), we collect the residual stream activations at the final sequence position (the query token). Depending on the experiment, the subspace is extracted either via the cross-covariance between the hidden states $H$ and the target unembedding vectors $Y$ ($W_U$ corresponding to the first token of the correct answer), or via counterfactual ablation ($\Delta H = H_{\mathrm{task}} - H_{\mathrm{entity}}$). We then perform Singular Value Decomposition (SVD). Instead of fixing a hard-coded dimension, we dynamically determine the rank $r$ by selecting the top singular vectors that account for a predefined threshold of the explained variance (set to $98\%$ across all main experiments).

\begin{figure}[t]
  \centering
  \includegraphics[width=\linewidth]{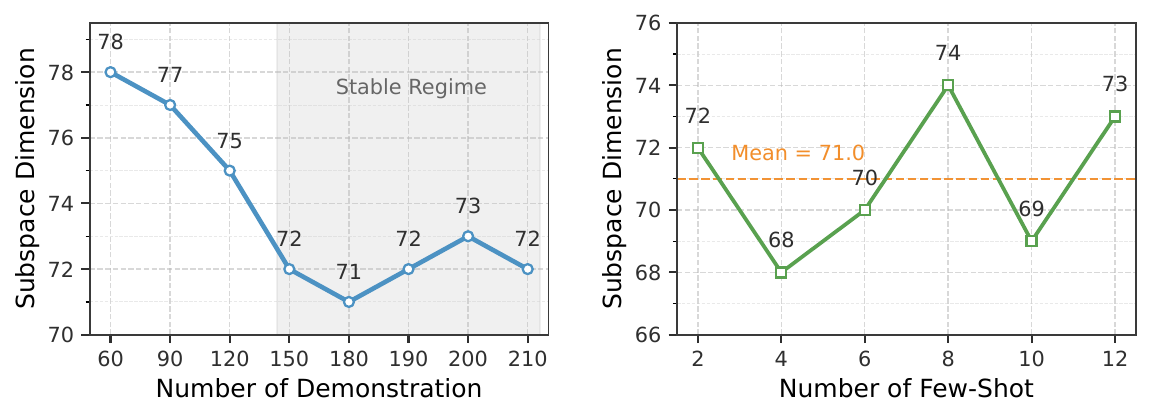}
  \caption{Dimension stability of concept-subspace estimates. The 98\% criterion selects a consistently low rank across the number of demonstrations and few-shot. }
  \label{fig:sensitivity}
\end{figure}

\subsection{Stability Analyses of Subspace Dimension.}
\label{subsec:stability_analyses}
To test whether the low rank reflects a stable task-aligned representation rather than an extraction artifact, we conduct the subspace estimation procedure at $\ell^\star=30$ under the same CounterFact setup while varying (i) the number of examples used for subspace estimation and (ii) the number of in-context demonstrations. In all cases, the estimated rank is the smallest $r$ satisfying ${\sum_{i=1}^{r}\sigma_i^2} / {\sum_i \sigma_i^2} \times 100 \% \ge 98\%$.
Figure~\ref{fig:sensitivity} (left) shows that the selected rank decreases from 69 to 61 as the estimation set grows and then stabilizes at 61--62 for dataset sizes 150--210. This is consistent with small-sample overestimation: under a conservative 98\% cutoff, limited data retain noisy tail directions, whereas more data suppress them without revealing a larger latent task manifold. Figure~\ref{fig:sensitivity} (right) shows a similarly stable pattern across prompt lengths: varying the few-shot count from 2 to 12 keeps the rank within 68--74 (mean 71.0), with no monotonic trend. Thus, more demonstrations strengthen evidence for the same relation but do not create new task directions.
 
Taken together, these robustness checks strengthen the main mechanistic claim. They do not imply that there is a single exact ``true'' rank recoverable from data, and the absolute number depends on the estimator and on the 98\% threshold. Rather, they establish a more important and defensible conclusion: across a reasonable range of estimation sizes and prompt configurations, the model consistently exhibits a sharply low-dimensional, task-aligned operating subspace, whose dimension stays in a narrow band from 60 to 70. This is the level of stability needed for the causal interpretation in the main experiments. The concept-only patching results are therefore not a brittle consequence of a single hand-picked extraction setting, but rather evidence for a stable geometric bottleneck in the model's task representation.

\begin{figure*}[t]
  \centering
  \includegraphics[width=0.49\textwidth]{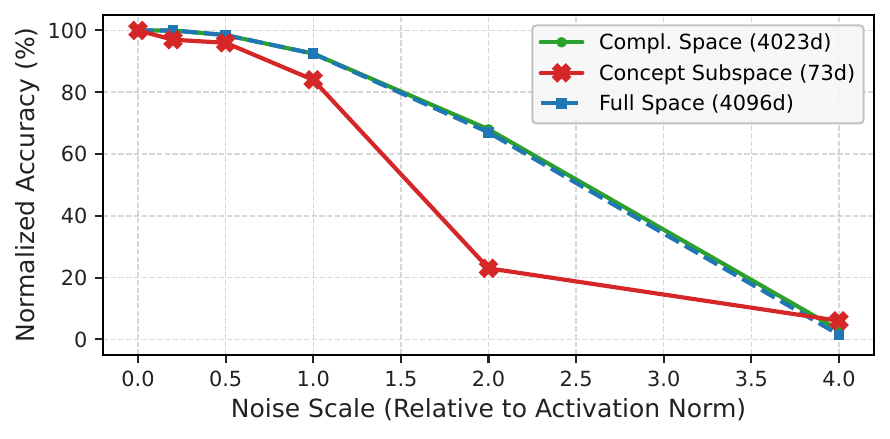}
  \hfill
  \includegraphics[width=0.49\textwidth]{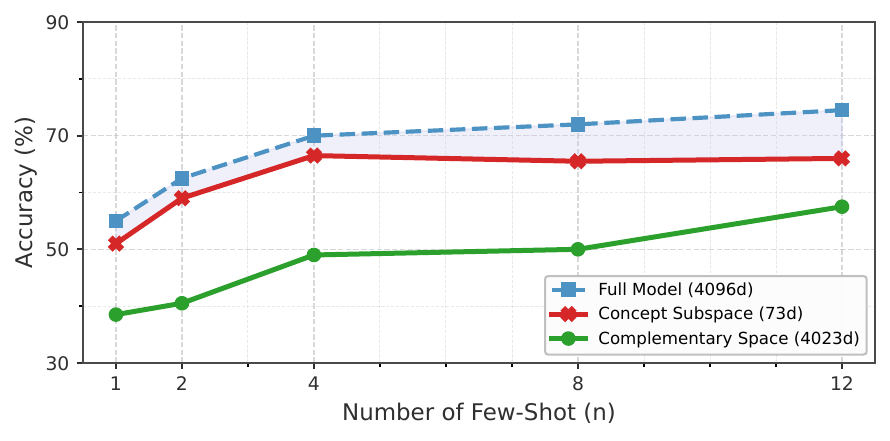}
  \caption{
  Subspace-selective sensitivity and few-shot scaling. \emph{Left:} concept-aligned noise degrades accuracy more sharply than matched-norm full-space or complementary-space noise. \emph{Right:} concept-subspace interventions recover most of the few-shot gains across shot counts.
  }
  \label{fig:robustness_to_noise_few_shot}
\end{figure*}

\subsection{Causal Sensitivity and Layerwise Emergence}
\label{subsec:causal_sensitivity}
\paragraph{Subspace-selective noise sensitivity.}
\label{subsec:Subspace_sensitivity}
We next test whether the model is selectively sensitive to perturbations in the learned concept subspace. In the same CounterFact multi-relation setting, we estimate a $73$-dimensional concept projector $P=\widehat{U}\widehat{U}^{\top}$ at the intervention layer using the $98\%$ explained-variance threshold. For correctly answered few-shot examples, we sample Gaussian noise $\xi$, decompose it into a concept-aligned component $P\xi$ and a complementary component $(I-P)\xi$, rescale each to the same norm ratio relative to the activation norm, and patch it into the residual stream; isotropic full-space noise with matched norm serves as a baseline. This isolates the effect of perturbation direction rather than magnitude.

Figure~\ref{fig:robustness_to_noise_few_shot} (left) shows a clear directional asymmetry. At small noise scales, all perturbations have a limited effect, but in the moderate-noise regime, concept-subspace noise degrades accuracy much more rapidly than complementary-space or full-space noise: around $2\times$ the activation norm, concept-aligned perturbations reduce accuracy to the low-20\% range, whereas the other two still retain roughly two-thirds accuracy. Only at the largest scale do all perturbations collapse performance. This pronounced sensitivity confirms that task-critical information is geometrically concentrated. Because the model can tolerate significant noise in the complementary space but rapidly degrades when the concept subspace is disrupted, we conclude that this low-dimensional subspace acts as the central causal bottleneck controlling the model's logical inference.

\paragraph{Few-shot scaling of subspace-mediated ICL gains.}
\label{appendix:scaling}
Using the same multi-relation CounterFact setup as above, we vary the number of few-shot examples $n\in\{1,2,4,8,12\}$ and ask whether the learned concept subspace remains the dominant carrier of recoverable few-shot gains as the prompt grows. For each query, we compute the activation change between the $n$-shot run and its zero-shot counterpart at the selected layer $\ell^\star$, decompose this change into the learned concept subspace $\widehat{U}_{\ell^\star}$ (73d) and its orthogonal complement, and inject each component into the zero-shot run. We then compare the resulting accuracy with the original full $n$-shot model.

Figure~\ref{fig:robustness_to_noise_few_shot} (right) shows a stable ordering across all shot counts: full-model performance is highest, concept-subspace recovery is consistently second, and complementary-space recovery is substantially weaker. Notably, the 73-dimensional concept subspace already recovers most of the few-shot benefit, increasing from 51\% at 1-shot to about 66\% from 4 shots onward, while the full model rises from 55\% to 75\%. This suggests that the core relation-level rule is quickly compressed into a low-dimensional state: once a few examples identify the task, most of the transferable gain is already available within the learned bottleneck.

The complementary space is not uninformative: its accuracy increases from 39\% to 58\% as $n$ grows, yet it remains substantially less predictive than the learned subspace. The result supports a \emph{concentrated-mediator} claim rather than an exclusivity claim: the concept subspace carries the dominant recoverable task signal, while off-subspace directions likely encode auxiliary information such as calibration, entity-specific disambiguation, or other distributed contextual effects that become more useful at larger $n$. The widening gap between the full model and concept-only intervention at higher shot counts is therefore informative rather than contradictory, indicating that additional demonstrations refine the computation beyond the current linear subspace estimate.

\paragraph{Layerwise emergence of a causal concept bottleneck.}
\label{appendix:layerwise_analysis}
Using the same CounterFact multi-relation setting, we estimate a layer-specific subspace $\widehat{U}_{\ell}$ at each layer with the same $98\%$ explained-variance criterion, and repeat the relation-swap intervention with three replacements: the full residual state, only the projection onto $\widehat{U}_{\ell}$, or only its orthogonal complement. We report the resulting layerwise override success rate.

Figure~\ref{fig:causal_and_geometry} (left) shows a clear stagewise pattern. In shallow layers, all three interventions remain near the floor, indicating that relational identity is not yet represented in a causally usable form. In the middle of the network, full-state replacement rises first while concept-subspace replacement lags behind, suggesting that task-relevant information is present before it is compressed into a compact low-dimensional representation. In later layers, concept-subspace replacement becomes consistently strong and approaches full-state performance, whereas complementary-space replacement remains near zero. This separation supports a late-layer causal bottleneck: most manipulable task identity is concentrated in a small set of task-aligned directions rather than distributed across the ambient 4096-dimensional residual stream. Since full-state replacement remains somewhat stronger, we interpret the learned subspace as a dominant but not exhaustive mediator. 
\begin{figure*}[t]
  \centering
  \includegraphics[width=0.49\textwidth]{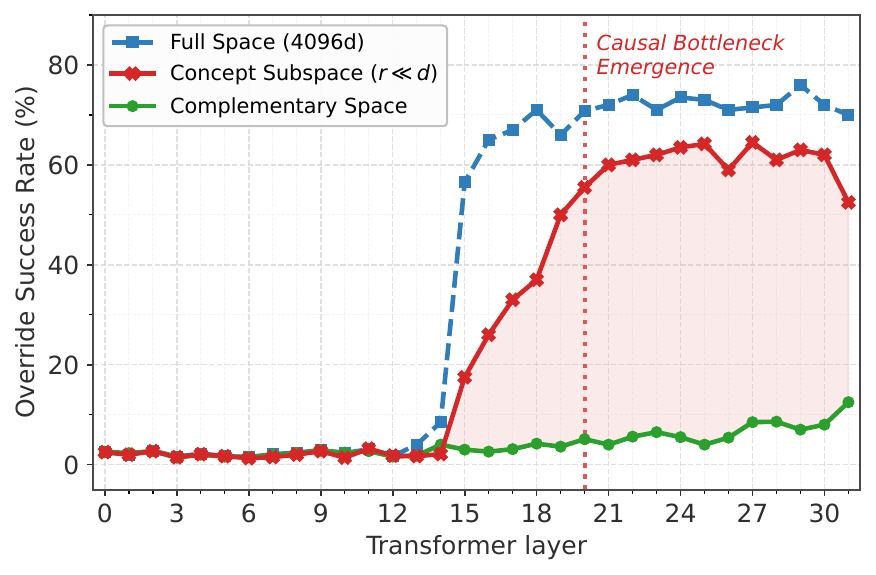}
  \hfill
  \includegraphics[width=0.49\textwidth]{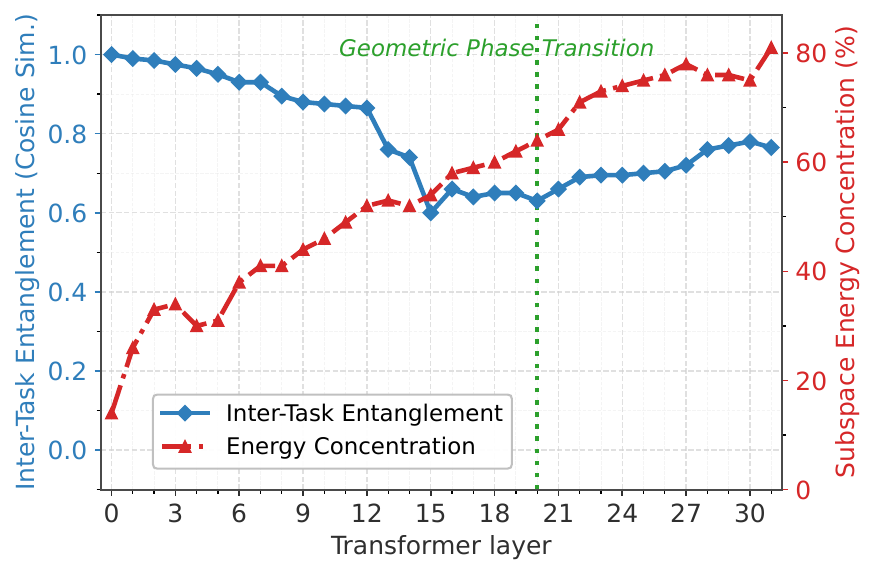}
  \caption{
    Layerwise emergence of a causal concept bottleneck. \emph{Left:} override success for full-state, concept-subspace, and complementary-space interventions across layers. \emph{Right:} inter-task entanglement decreases while concept-subspace energy concentration increases with depth.
  }
  \label{fig:causal_and_geometry}
\end{figure*}

We also track two geometric diagnostics. Let $P_{\ell}=\widehat{U}_{\ell}\widehat{U}_{\ell}^{\top}$ and let $h_{\ell}^{(R,s)}$ be the query-token residual state for relation $R$ and subject $s$. For subject-matched relation pairs, define inter-task entanglement $E_{\ell} = \mathbb{E}_{s,R_1\neq R_2} \![\cos\!( P_{\ell}h_{\ell}^{(R_1,s)}, P_{\ell}h_{\ell}^{(R_2,s)} ) ]$ and energy concentration $C_{\ell} = 100\cdot \mathbb{E}_{s,R} \![ {\|P_{\ell}h_{\ell}^{(R,s)}\|_2} / {\|h_{\ell}^{(R,s)}\|_2} ]$.
Here $E_{\ell}$ measures inter-task entanglement in the learned subspace, while $C_{\ell}$ measures the fraction of residual-state norm captured by that subspace.
Figure~\ref{fig:causal_and_geometry} (right) shows the corresponding geometric picture. In shallow layers, subject-matched prompts for different relations have high inter-task entanglement, consistent with representations that are still dominated by shared entity or surface-context information. As depth increases, this entanglement drops from its shallow-layer value, while the concept-subspace concentration increases. Thus, the same layer range in which concept-subspace interventions become effective is also where relation-specific directions become more separated and residual variation is increasingly captured by the learned subspace. We interpret this as geometric consolidation of task identity rather than a strict phase transition: the trend supports the emergence of a compact causal bottleneck, but does not imply that all task-relevant information is confined to $\widehat{U}_{\ell}$.

\subsection{Task Invariance and Format Disentanglement}
\label{subsec:task_invariance}
\paragraph{Task-format disentanglement in concept coordinates.}
To test whether the learned subspace captures task identity rather than superficial prompt format, we construct a factorial dataset with five relation tasks and three prompt formats: symbolic, verbose natural language, and question answering. For each relation $r$, format $f$, and query instance $i$, we extract an incremental task vector at the selected layer $v_{\ell^\star}^{(r,f,i)} = h_{\ell^\star,\mathrm{few}}^{(r,f,i)}
- h_{\ell^\star,\mathrm{zero}}^{(f,i)}$, and decompose it using the learned orthonormal concept basis $\widehat U_{\ell^\star}$: $\beta^{(r,f,i)} = \widehat U_{\ell^\star}^{\top} v_{\ell^\star}^{(r,f,i)}$ and $v_{\perp}^{(r,f,i)} = (I-\widehat U_{\ell^\star}\widehat U_{\ell^\star}^{\top}) v_{\ell^\star}^{(r,f,i)}$.
We compare the full residual vector $v$, the concept coordinates $\beta$, and the complementary component $v_\perp$. For each representation, we compute silhouette scores using either relation labels or prompt-format labels. A format-invariant task representation should increase task separability while suppressing format separability. \looseness=-1

Table~\ref{tab:task_format_silhouette} and Figure~\ref{fig:format_invariant_geometry} reveal a striking geometric disentanglement. Compared to the ambiguous full space, the low-dimensional concept subspace extracts a format-invariant task manifold: task representations form dense, well-separated clusters ($0.411$), while format noise is erased ($-0.007$), forcing identical tasks from disparate templates to perfectly overlap. Conversely, in the complementary space, task identity collapses ($0.022$), leaving the logical rule drowned out by dominant surface-format residues ($0.173$). This diagnostic supports interpreting $\widehat U_{\ell^\star}$ as a task-aligned concept subspace rather than a generic low-rank projection. It also explains why complementary-subspace patching is ineffective in our intervention experiments: the complement preserves format-related information but contains little of the relation-level signal needed to restore in-context prediction. 
\begin{table}[t]
  \caption{Task–format separability across representation spaces. Concept coordinates improve relation separability while suppressing format separability.}
    \centering
    \small
    \begin{tabular}{lcc}
        \hline
        Space & Task silhouette & Format silhouette \\
        \hline
        Full space  & 0.112 & 0.129 \\
        Concept subspace  & 0.411 & -0.007 \\
        Complementary space  & 0.022 & 0.173 \\
        \hline
    \end{tabular}
    \label{tab:task_format_silhouette}
\end{table}

\begin{figure*}[t]
  \centering
  \includegraphics[width=\textwidth]{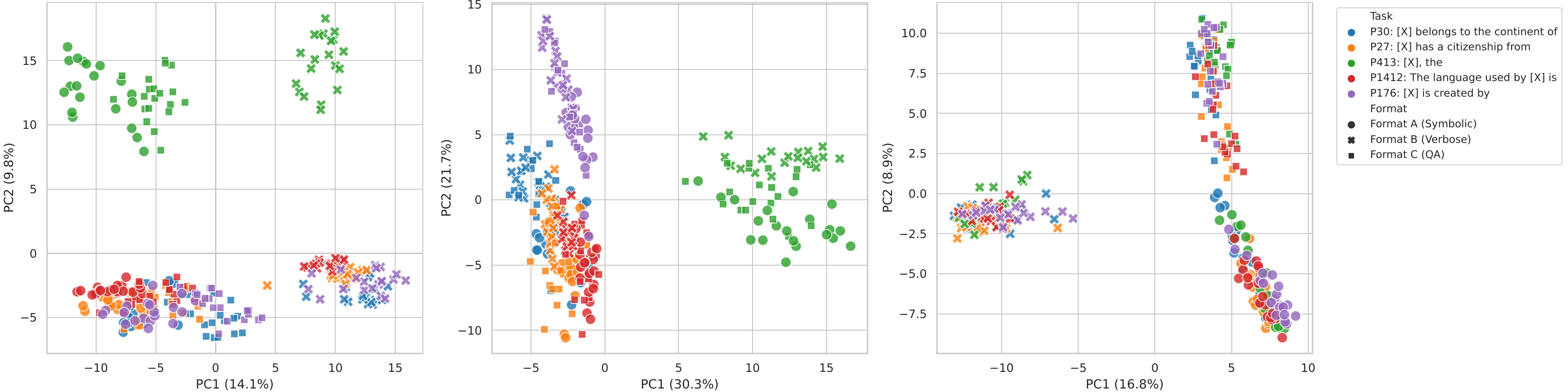}
  \caption{
    Task–format geometry after concept-subspace projection. PCA of task vectors shows that concept coordinates cluster primarily by relation, whereas complementary directions retain more prompt-format structure. \emph{Left:} full space; \emph{Middle:} concept subspace; \emph{Right:} complementary space. \looseness=-1
  }
  \label{fig:format_invariant_geometry}
\end{figure*}

\paragraph{Cross-format alignment of task representations.}
We test whether the learned subspace captures task semantics rather than the prompt template. Fixing the same underlying relation and few-shot exemplars, we rewrite each prompt in three formats (symbolic, QA, and natural language), extract the query-token residual at layer $\ell^\star$, and compare cross-format cosine similarity in the full residual stream, the learned concept subspace $\widehat{U}_{\ell^\star}$ (73d), and its orthogonal complement.

Figure~\ref{fig:subspace_similarity} (left) shows a consistent ordering: similarity is highest in the concept subspace (0.98), lower in the full state (0.89), and lowest in the complementary space (0.79). This suggests that the full residual mixes shared task information with format-specific variation, while much of the surface-form noise lies outside $\widehat{U}_{\ell^\star}$. The near-perfect alignment inside the learned subspace indicates that it preserves the same relation-level computation across prompt realizations.

This also validates the subspace estimator: if $\widehat{U}_{\ell^\star}$ primarily captured lexical or stylistic regularities, projecting onto it would not improve cross-format agreement over the full state. Instead, projection removes format-dependent variation and aligns representations. We therefore interpret $\widehat{U}_{\ell^\star}$ as a format-stable carrier of task information under controlled prompt rewrites while avoiding a stronger claim of robustness under arbitrary distribution shifts.

\begin{figure}[t]
  \centering
  \includegraphics[width=\linewidth]{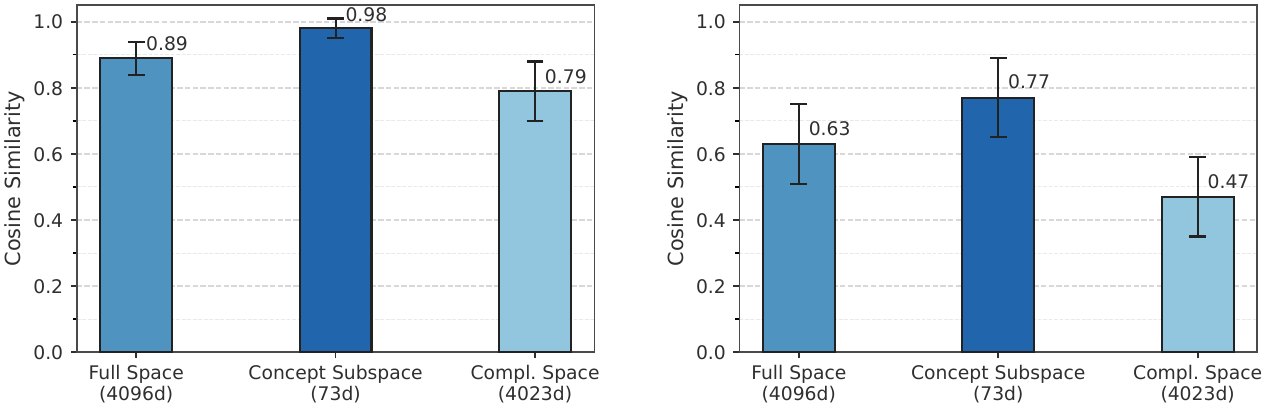}
  \caption{
    Format invariance and corruption stability of concept representations. \emph{Left:} cross-format cosine similarity. \emph{Right:} clean–corrupted representation similarity under label corruption.
  }
  \label{fig:subspace_similarity}
\end{figure}

\paragraph{Stability under label corruption.}
\label{appendix:demo_stability}
We test whether the learned concept subspace remains stable when demonstrations are severely corrupted. Starting from a clean few-shot prompt, we construct a noisy version by replacing demonstration labels with random unrelated tokens while keeping the query and prompt structure fixed. At the selected layer, we extract the query-token residual state from both runs and compare clean--noisy cosine similarity in three spaces: the full residual stream, the learned 73-dimensional concept subspace, and its orthogonal complement.

Figure~\ref{fig:subspace_similarity} (right) shows a consistent ordering: similarity is highest in the concept subspace (0.77), lower in the full state (0.63), and lowest in the complementary space (0.47). This suggests that the full residual mixes task information with corruption-induced noise, while much of that noise is diverted into off-subspace directions. The higher similarity inside the learned subspace indicates that relation-level information is more stable than the ambient representation under label corruption.

This supports interpreting the learned subspace as a denoised carrier of task information rather than a prompt-specific artifact: projecting onto it increases agreement between clean and corrupted representations instead of reducing it. We view this as representation-level robustness under controlled corruption, not a general guaranty under arbitrary adversarial shifts. Still, alongside the patching and swapping results, this further strengthens the claim that the learned concept subspace captures task-level structure rather than prompt-local statistics.

\subsection{Prompt Contamination and Few-Shot Inference Dynamics}
\label{subsec:prompt_ontamination}
\paragraph{Prompt Contamination and Concept-Vector Alignment.}
We next test whether failure under prompt contamination can be explained geometrically inside the learned concept subspace. Let $P=\widehat{U}\widehat{U}^{\top}$ denote the projector onto the learned concept subspace, and let $E_{\mathrm{target}}$ be the unembedding matrix of the correct answer token. We define an answer-anchored reference vector $\beta_{\mathrm{ideal}}=\frac{P E_{\mathrm{target}}}{\|P E_{\mathrm{target}}\|_2}$ and fix the total prompt budget at 10 shots while progressively replacing relevant demonstrations with cross-domain irrelevant examples. For each contamination level, we run autoregressive inference, extract the induced task direction $\beta_{\mathrm{ctx}}$ in the concept subspace, and measure its absolute alignment to the reference $a = \bigl|\cos(\beta_{\mathrm{ctx}}, \beta_{\mathrm{ideal}})\bigr|$, together with the final prediction accuracy.  

Figure~\ref{fig:irrelevant_shots} shows that internal geometric drift and external performance degradation closely co-vary. As the number of irrelevant shots increases, alignment to the target answer direction decreases steadily, and accuracy declines in parallel, with a Pearson correlation of $r=0.787$. The degradation is initially gradual but becomes much sharper in the high-contamination regime, where both curves bend downward. This pattern suggests that irrelevant demonstrations do not merely add generic noise; they systematically rotate the inferred task vector away from the answer-consistent region of the learned concept subspace. The result strengthens the mechanistic interpretation of $\widehat{U}$: the same subspace that supports recovery and concept swapping also predicts robustness limits under prompt contamination. At the same time, the alignment--accuracy relation is strong but not perfect, so this quantity is best viewed as a compact diagnostic of task fidelity rather than a complete account of all failure modes. 
\begin{figure}[t]
    \centering
    \includegraphics[width=0.8\linewidth]{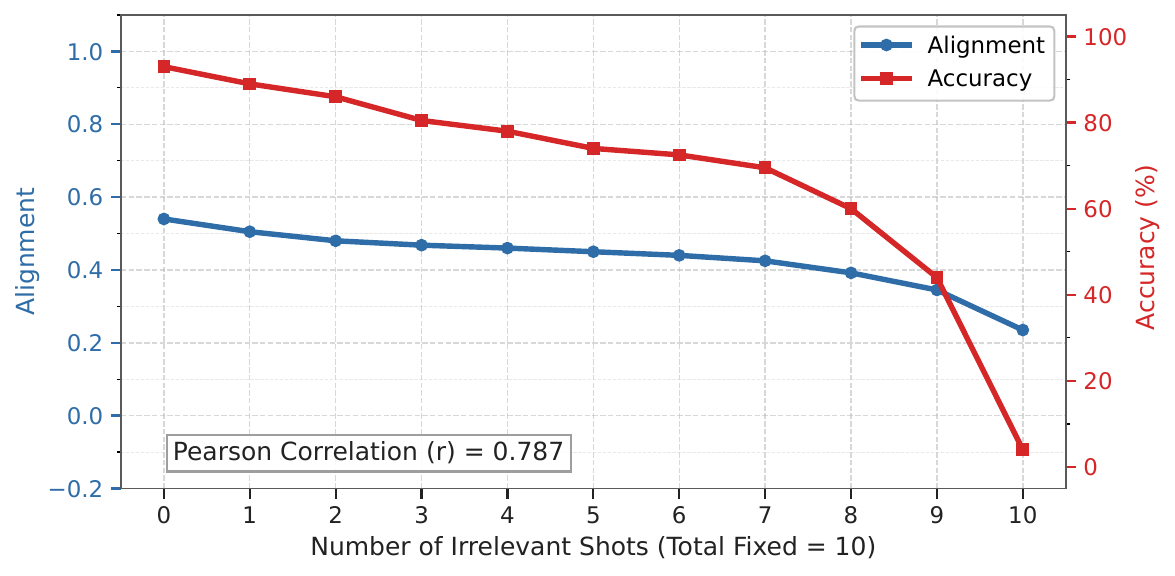}
    \caption{Prompt contamination rotates the inferred concept direction. Alignment in concept space and prediction accuracy jointly decline as irrelevant demonstrations replace relevant shots.}
    \label{fig:irrelevant_shots}
\end{figure}

\subsection{Experimental Results with Qwen2.5-7B}
\label{sec:qwen_results}
We further test whether the observed concept-subspace effect is specific to Llama-3-8B~\cite{grattafiori2024llama} by repeating the corresponding experiments on Qwen2.5-7B \cite{qwen_qwen25_2024}. Unless otherwise stated, we use the same CounterFact-derived multi-relation prompt construction, clean--corrupted intervention protocol, subspace extraction criterion, and evaluation metrics as in the main experiments. For Qwen2.5-7B, the intervention layer is the 27th transformer layer, and all subspace estimates and patching interventions in this subsection are performed at that layer.

Figure~\ref{fig:qwen_layerwise_and_scree} first shows the layerwise localization and low-rank extraction results. The clean-to-corrupted full-state patching curve again exhibits a late-layer pattern: early layers provide little recovery, while recovery becomes substantial only in the later part of the network and approaches the clean reference level near the final layers. This indicates that, as in Llama-3-8B, task-level information is not uniformly available throughout the stack but becomes causally accessible after sufficient contextual processing. At the selected 27th layer, the explained-cross-variance spectrum is sharply concentrated. The same 98\% threshold selects a 45-dimensional subspace out of the 3584-dimensional residual stream, suggesting that the task-aligned signal remains highly compressed in this model as well. \looseness=-1

Figure~\ref{fig:qwen_main_interventions} tests whether this low-dimensional subspace is causally relevant. In the patching experiment, the baseline accuracy is 32.5\%, while the few-shot upper bound is 57.0\%. Full-space patching reaches 57.0\%, corresponding to 100.0\% recovery of the clean--baseline gap. Patching only the 45-dimensional concept subspace reaches 53.0\%, corresponding to 83.7\% recovery, despite modifying only about 1.1\% of the residual dimensions. In contrast, patching the 3539-dimensional complementary space reduces accuracy to 21.0\%, yielding a negative recovery rate. We therefore do not interpret the complementary intervention as carrying useful task information in this setting. Rather, it provides a strong control showing that the recovery is not explained by the patched dimensions. 

The concept-swap results show the same qualitative specificity. Full-space replacement achieves 69.0\% override success for $R_1 \rightarrow R_2$ and 61.0\% for $R_2 \rightarrow R_1$. Replacing only the learned concept subspace remains close to this level, with 59.0\% and 56.0\% override success, respectively. By contrast, complementary-space replacement is largely ineffective, reaching only 8.0\% and 5.0\%. Thus, on Qwen2.5-7B, relation identity is again concentrated in the learned task-aligned subspace rather than in the high-dimensional orthogonal complement.

Overall, these results support the same restrained interpretation as the main experiments. The exact selected rank and intervention layer differ across models, and the experiment does not imply a universal concept dimension or a complete circuit-level explanation. Nevertheless, Qwen2.5-7B exhibits the same qualitative pattern: a small task-aligned residual subspace is sufficient to recover most of the clean--corrupted gap and to redirect predictions under concept swaps, whereas the complementary space has little corresponding effect.
\begin{figure*}[!t]
  \centering
  \includegraphics[width=0.49\textwidth]{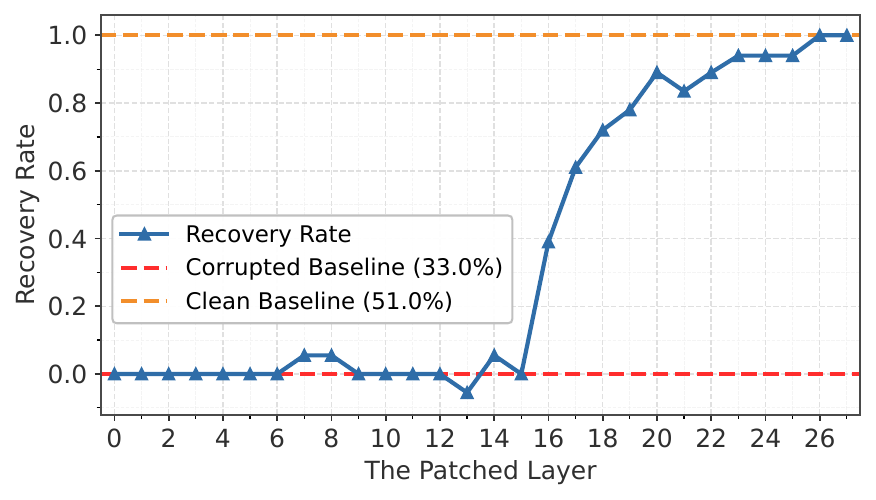}
  \hfill
  \includegraphics[width=0.49\textwidth]{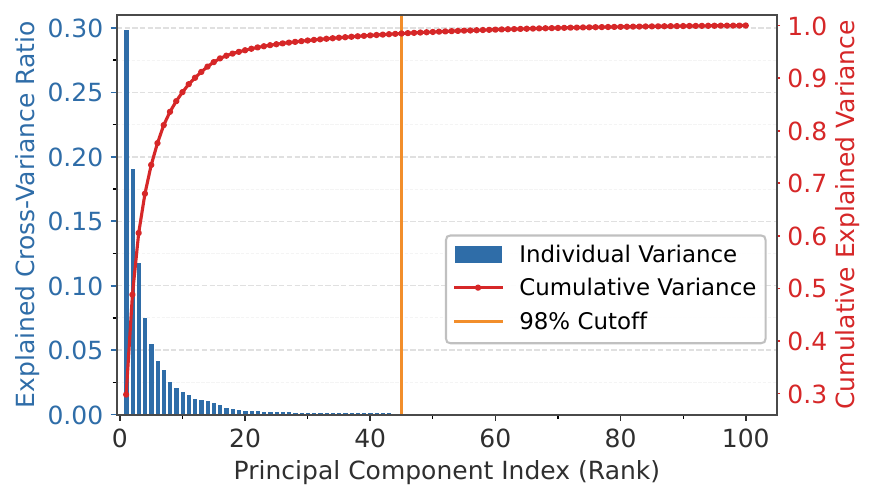}
  \caption{
    Layerwise localization and low-rank extraction of task-aligned residual directions. \emph{Left}: clean-to-corrupted patching localizes layers where task information is causally accessible. \emph{Right}: the 98\% explained-cross-variance threshold selects a low-dimensional concept subspace at layer 27.
  }
  \label{fig:qwen_layerwise_and_scree}
\end{figure*}
\begin{figure*}[t]
    \centering
    \includegraphics[width=0.49\textwidth]{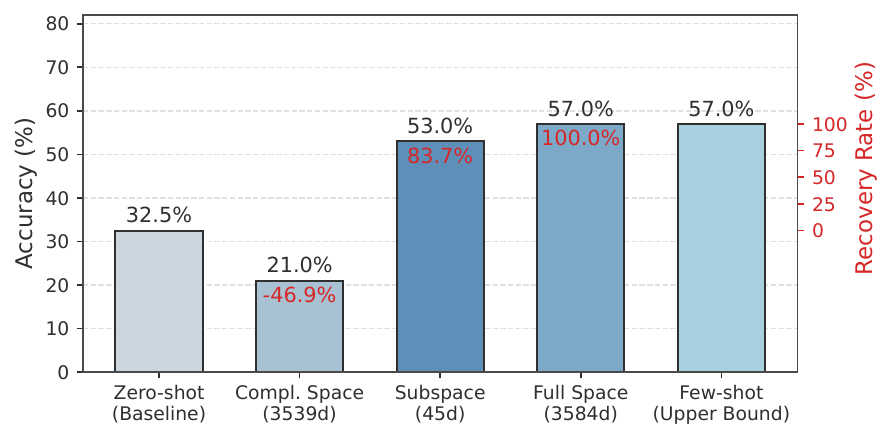}
    \hfill
    \includegraphics[width=0.49\textwidth]{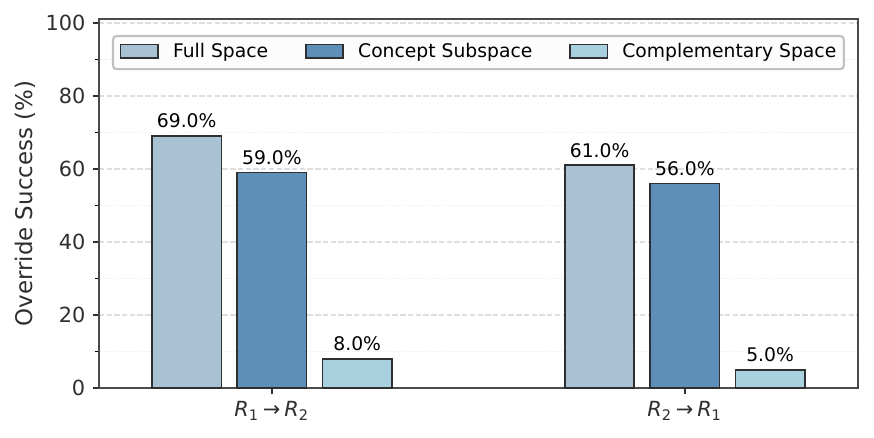}
    \caption{
    Concept-subspace patching and swapping. \emph{Left:} the learned subspace recovers most of the clean-corrupted gap. \emph{Right:} subspace swaps redirect predictions toward the injected relation.
    }
    \label{fig:qwen_main_interventions}
\end{figure*}

\subsection{Experimental Results on Cross-Lingual Task}
\label{sec:crosslingual}
We use a controlled cross-lingual task to test whether concept-subspace mediation extends beyond factual-relation ICL to a more abstract rule: selecting the output language. From MUSE English--French and English--Spanish dictionaries~\cite{lample2018word}, we construct word triples and retain only examples whose French and Spanish targets are single Llama-3 tokens, lexically non-overlapping, and not very short or ambiguous. Prompts contain no explicit language tags such as \texttt{Fr:} or \texttt{Es:}; all demonstrations use the same format, e.g., \texttt{water -> eau}, so the model must infer the target-language rule from context. Let $R_1$ denote French and $R_2$ Spanish. The ``corrupted'' run below is a counterfactual competing-rule run, not a malformed prompt. All subspace estimation and interventions are performed on Llama-3-8B at layer 30, at the final query token.

Figure~\ref{fig:llama_layerwise_and_scree_crosslingual} localizes the cross-lingual rule representation. Full-state patching has little effect in early layers and becomes effective only in later layers, matching the late emergence observed in the CounterFact experiments. At layer 30, the cross-variance spectrum is highly concentrated: the same 98\% explained-cross-variance criterion selects a 92-dimensional subspace out of the 4096-dimensional residual stream. Thus, the target-language rule appears in a compact set of activation directions, although this does not imply that the full translation computation is low-dimensional.

Figure~\ref{fig:llama_main_interventions_crosslingual} tests the causal role of this subspace. When evaluated against the French target, the competing-rule baseline is essentially at the floor, while native few-shot French and Spanish contexts reach 73.0\% and 72.0\% accuracy, respectively. Full-state patching raises accuracy to 52.6\%, recovering 72.0\% of the clean--competing gap. Patching only the 92-dimensional concept subspace reaches 29.6\% accuracy, or 40.5\% recovery, despite modifying only $92/4096$ residual dimensions. In contrast, patching the 4004-dimensional orthogonal complement gives only 0.7\% accuracy and 1.0\% recovery. The effect is therefore tied to task-aligned directions rather than intervention size.

The swap experiment gives a sharper test of directional control. Here $R_1 \!\to\! R_2$ denotes overriding a French-context run toward Spanish output, and $R_2 \!\to\! R_1$ the reverse. Full-state replacement achieves 72.0\% override success in both directions. Replacing only the learned cross-lingual subspace remains close to this level, with 66.7\% for $R_1 \!\to\! R_2$ and 65.3\% for $R_2 \!\to\! R_1$, whereas complementary-space replacement yields 0.0\% in both directions. Thus, the learned subspace carries most of the target-language identity needed to redirect generation. The remaining gap in patching accuracy suggests that lexical, calibration, or decoding-related factors outside this single-layer linear subspace also contribute. Overall, the cross-lingual results extend the main finding beyond CounterFact-style relations: a low-dimensional, task-aligned residual subspace acts as a major causal mediator of language-rule selection, but not as a complete circuit-level explanation of cross-lingual generation.

\begin{figure*}[t]
  \centering
  \includegraphics[width=0.49\textwidth]{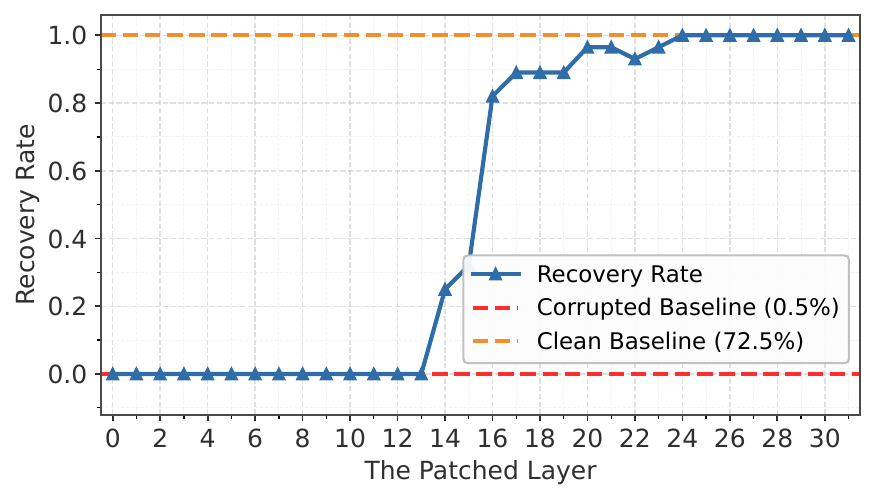}
  \hfill
  \includegraphics[width=0.49\textwidth]{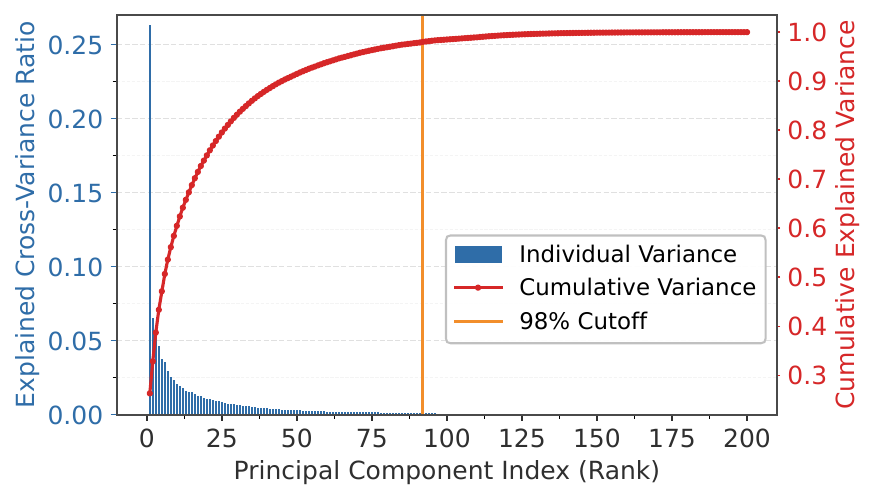}
  \caption{
    Layerwise localization and low-rank extraction of task-aligned residual directions. \emph{Left}: clean-to-corrupted patching localizes layers where task information is causally accessible. \emph{Right}: the 98\% explained-cross-variance threshold selects a low-dimensional concept subspace at layer 30.
  }
  \label{fig:llama_layerwise_and_scree_crosslingual}
\end{figure*}

\begin{figure*}[t]
    \centering
    \includegraphics[width=0.49\textwidth]{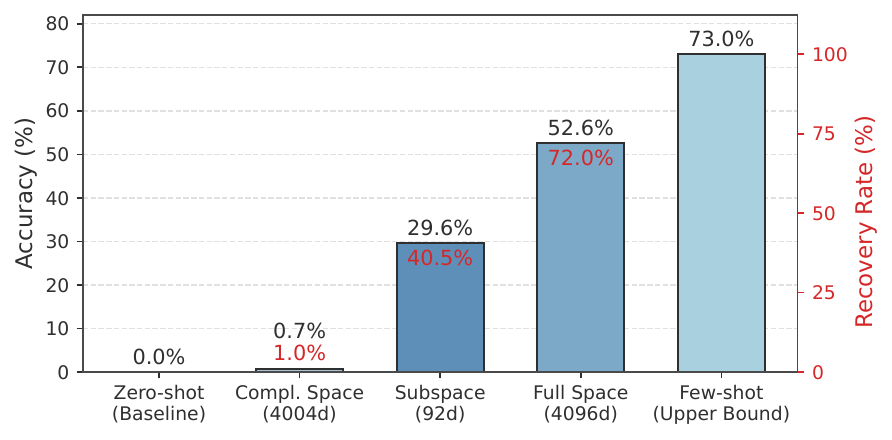}
    \hfill
    \includegraphics[width=0.49\textwidth]{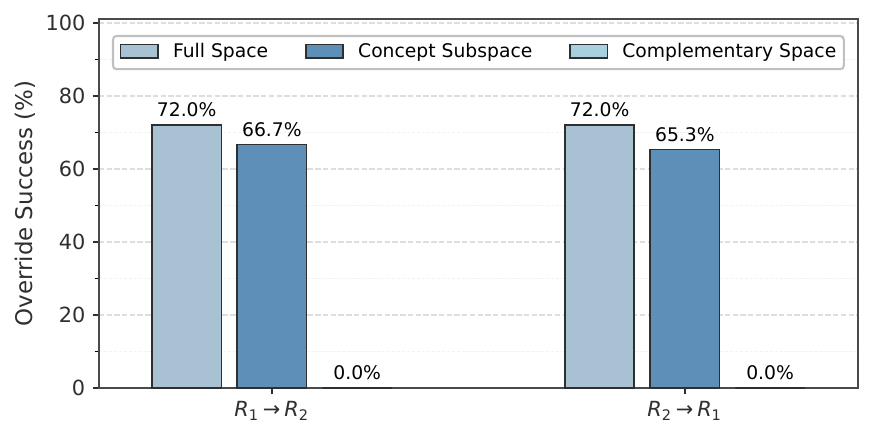}
    \caption{
    Concept-subspace patching and swapping. \emph{Left:} the learned subspace recovers most of the clean-corrupted gap. \emph{Right:} subspace swaps redirect predictions toward the injected relation.
    }
    \label{fig:llama_main_interventions_crosslingual}
\end{figure*}

\section{Broader Impacts}
This work is a foundational study of in-context learning mechanisms rather than a deployment-oriented system. Its main potential benefit is to make model behavior more inspectable. By identifying low-dimensional task-aligned activation subspaces, the proposed framework can help diagnose whether a model is using stable task structure or relying on superficial prompt artifacts. Such diagnostics may support more targeted robustness evaluation, prompt auditing, and controlled intervention methods, especially in settings where few-shot behavior is difficult to interpret from outputs alone.

The same mechanism also has possible misuse risks. If task-level activation subspaces can be identified and manipulated, they could be used to steer model behavior in ways that are less visible than ordinary prompt changes. Moreover, because our interventions recover a dominant but incomplete mediator, they should not be treated as guarantees of correctness, fairness, or safety. A model may appear to follow an intended concept while still relying on biased, entity-specific, or format-dependent features outside the recovered subspace.

The empirical setting also matters for impact. Our relation tasks and pretrained model activations may reflect biases present in factual datasets and pretraining corpora. Compact concept directions could make such biases easier to audit, but they could also concentrate and transfer undesirable associations if used uncritically. Finally, although our experiments are far less costly than training foundation models, collecting activations and running intervention sweeps still incur compute costs. We therefore view the method as most valuable when it replaces broad trial-and-error testing with more precise mechanistic evaluation.

\end{document}